\documentclass{article}

\usepackage[table]{xcolor}
\usepackage{graphicx}
\usepackage{subcaption}
\usepackage[colorlinks, urlcolor=myblue, linkcolor=myblue, citecolor=gray]{hyperref}
\usepackage{fontawesome5} % in preamble
\usepackage{wrapfig}
\usepackage[preprint]{main}

\usepackage[utf8]{inputenc} % allow utf-8 input
\usepackage[T1]{fontenc}    % use 8-bit T1 fonts
\usepackage{hyperref}       % hyperlinks
\usepackage{url}            % simple URL typesetting
\usepackage{booktabs}       % professional-quality tables
\usepackage{amsfonts}       % blackboard math symbols
\usepackage{amsmath}
\usepackage{nicefrac}       % compact symbols for 1/2, etc.
\usepackage{microtype}      % microtypography
\usepackage{xcolor}         % colors
\usepackage{amsthm}
\theoremstyle{plain}

\newtheorem{proposition}{Proposition}

\theoremstyle{definition}

\newtheorem{remark}{Remark}
\usepackage{thm-restate}

\usepackage[most]{tcolorbox}

\usepackage{amssymb}% http://ctan.org/pkg/amssymb
\usepackage{pifont}% http://ctan.org/pkg/pifont
\newcommand{\cmark}{\ding{51}}%
\newcommand{\xmark}{\ding{55}}%
\newcommand{\meanstd}[2]{#1\,{\scriptsize$\pm$\,#2}}
\newcommand{\claudelogo}{%
  \raisebox{-0.15ex}{\includegraphics[height=0.95em]{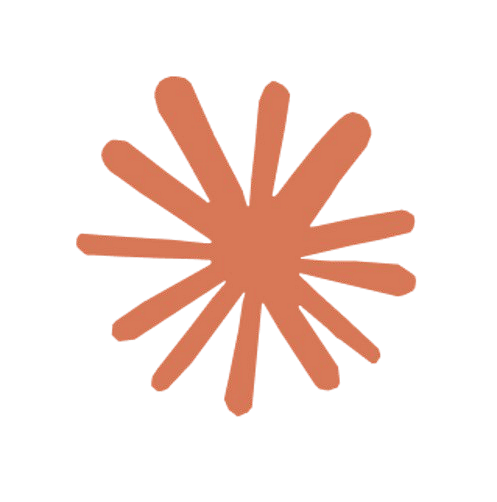}}%
}

\definecolor{mydarkred}{rgb}{0.6,0,0}
\definecolor{myblue}{HTML}{268BD2}
\definecolor{mygray}{HTML}{e9ecef}
\definecolor{lightpurple}{HTML}{F6E6FF}
\definecolor{color_bg}{HTML}{f5ebe0}
\definecolor{color_tt}{HTML}{c9ada7} 
\definecolor{color_fr}{HTML}{9a8c98} 
\definecolor{textblack}{HTML}{000000} 

\tcbset{
  mybox/.style={
    colback=color_bg,     
    colframe=color_fr,     
    coltitle=textblack,         
    fonttitle=\bfseries,      
    colbacktitle=color_tt,   
    enhanced,
    attach boxed title to top left={yshift=-1mm,xshift=2mm}, 
    boxed title style={
      colframe=color_fr,
      colback=color_tt,
      boxrule=1.2pt,
    },
    boxrule=1.2pt,            
    arc=2mm,                  
    top=4pt, bottom=4pt, left=6pt, right=6pt,
  }
}

\definecolor{lightbeige}{RGB}{245,240,230}

\newtcblisting{promptbox}{
  listing only,
  breakable,
  colback=lightbeige,
  colframe=black!20,
  boxrule=0.3pt,
  arc=1pt,
  left=2mm,
  right=2mm,
  top=1mm,
  bottom=1mm,
}

\title{Neglected Free Lunch from Post-training:\\Progress Advantage for LLM Agents}

\author{%
  Changdae Oh\textsuperscript{1}\quad
  Wendi Li\textsuperscript{1}\quad
  Seongheon Park\textsuperscript{1}\quad
  Samuel Yeh\textsuperscript{1}\\[0.2em]
  \textbf{Tanwi Mallick\textsuperscript{\textbf{2}}}\quad
  \textbf{Sharon Li\textsuperscript{\textbf{1}}}\\[0.2em]
  \textsuperscript{1}University of Wisconsin--Madison \quad
  \textsuperscript{2}Argonne National Laboratory \\
  \texttt{\{changdae,sharonli\}@cs.wisc.edu}\quad\quad\quad\texttt{tmallick@anl.gov}
}

\begin{document}
\addtocontents{toc}{\protect\setcounter{tocdepth}{-1}}

\maketitle
\vspace{-1em}
\begin{abstract}
 Process reward models enable fine-grained, step-level evaluation of LLMs, yet building them for agentic settings remains prohibitively difficult: long-horizon interactions, irreversible actions, and stochastic environment feedback make both human annotation and Monte Carlo estimation infeasible at scale. 
 In this work, we show that reinforcement learning (RL) post-training already provides the ingredients for effective step-level scoring, eliminating the need for dedicated reward model training altogether. Concretely, we derive an implicit advantage under a general stochastic Markov decision process, which we term \emph{progress advantage}---log-probability ratio between the RL-trained policy and its reference policy exactly recovers the optimal advantage function.
 This formulation makes the resulting signal annotation-free, domain-agnostic, and available as a byproduct of the standard RL post-training pipeline. 
 We validate the effectiveness of the progress advantage across three different applications: \textit{test-time scaling}, \textit{uncertainty quantification}, and \textit{failure attribution} on five benchmarks and four model families. Across all settings, it consistently outperforms confidence-based baselines and, despite requiring no task-specific training, surpasses dedicated trained reward models. We complement these results with deeper analyses on characteristics of progress advantage, offering practical guidance for adoption in real-world agentic systems. URLs: \href{https://github.com/deeplearning-wisc/progress-advantage}{\faGithub}, \href{https://changdaeoh.github.io/progress-advantage/}{\faGlobe}
\end{abstract}

\section{Introduction} \label{sec:intro}

Reinforcement learning (RL) has become the dominant paradigm for post-training large language models (LLMs), producing agents that can operate autonomously across complex, multi-turn tasks involving tool use, web navigation, and code execution~\cite{silver2025welcome,openai2025agent,google2025agent,anthropic2026cowork}. A central challenge in deploying these agents is evaluating the quality of their behavior, so-called \textit{reward}, at the level of individual steps rather than only at the end of a trajectory. Outcome reward models assign a single scalar to a generated output~\cite{cobbe2021training,yu2024ovm,uesato2022solving,shao2024deepseekmath,guo2025deepseek}, but this coarse signal provides little guidance for credit assignment over trajectories that may span hundreds of actions. Process reward models (PRMs) address this by providing step-level supervision~\cite{lightman2023let,wang2024math,snell2025scaling, luo2024improve, li2025process}, enabling finer-grained trajectory evaluation that benefits test-time scaling, runtime monitoring, and failure diagnosis. However, PRMs have been explored mostly in mathematical reasoning, and they remain largely underexplored for LLM agents.

Unfortunately, building PRMs for LLM agents is notoriously difficult: agentic trajectories span long horizons and pass through stateful environments where actions such as sending an email or deleting a file are irreversible, preventing the backtracking and repeated rollouts that traditional Monte Carlo estimation relies on~\cite{wang2024math,luo2024improve}. 
Collecting step-level human annotations in this setup is prohibitively expensive, and even when domain-specific (process) reward models can be trained, they often fail to generalize across tasks or environments~\cite{gao2023scaling,mao2025information,shao2025spurious,zheng2025processbench}. The result is a conspicuous gap: the agents that most need process-level evaluation are precisely the ones for which building process reward models is least feasible.

\begin{figure}[!t]
    \centering
    \vspace{-0.4em}
    \makebox[\textwidth][c]{
    \includegraphics[width=1.025\textwidth]{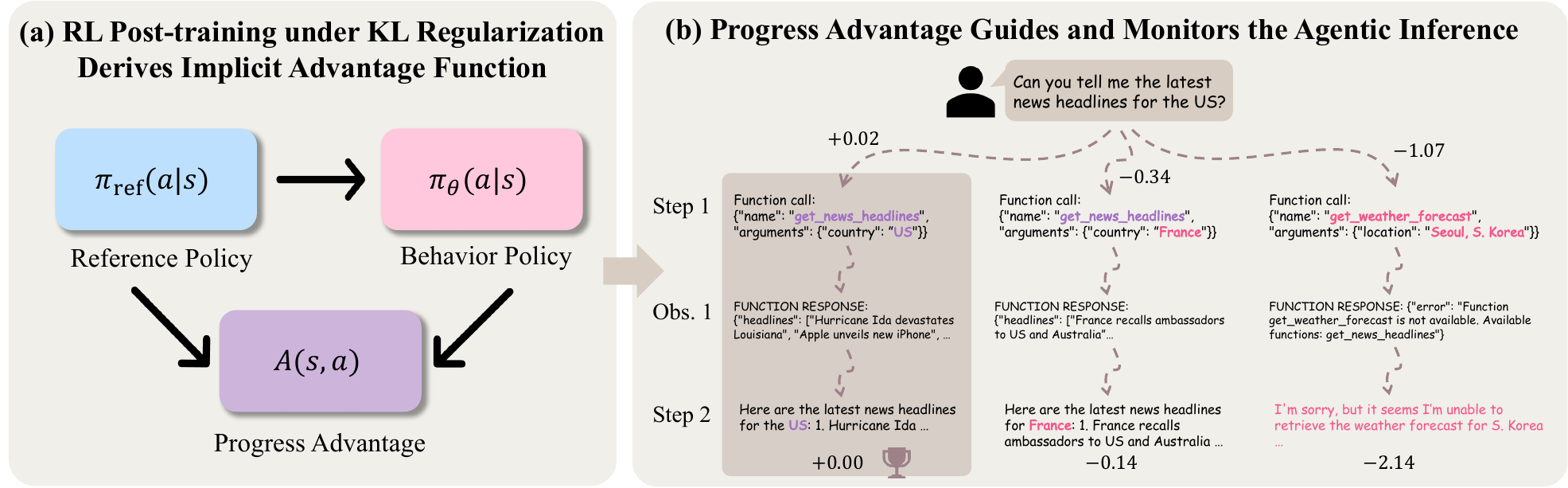}
    }
    \vspace{-1em}
    \caption{\textbf{Framework overview}. (a) We derive an optimal advantage function from an RL-trained policy and its reference policy, which can (b) score the LLM agent trajectories at both the step and trajectory levels without dedicated reward model training. 
    }
    \label{fig:illustration}
    \vspace{-0.5cm}
\end{figure}

In this paper, we take a fundamentally different approach. Rather than collecting process annotations or training dedicated reward models~\cite{lightman2023let,wang2024math,choudhury2025process,xi2025agentprm,yuan2025free,liu2025agentic}, we show that RL post-training already freely encodes a process-level signal that can be directly used for inference-time scoring.
Concretely, the log-probability ratio between the trained policy and its reference policy—readily available from standard RL post-training—constitutes a theoretically grounded measure of per-step progress,
which we term \textbf{progress advantage}.
We prove that progress advantage exactly recovers the optimal advantage function under the general stochastic MDP (Proposition~\ref{prop:prog_adv}). 
While prior implicit PRM approaches~\cite{rafailov2023direct,rafailov2024from} exploit similar likelihood-based signals in deterministic reasoning settings, agentic environments involve stochastic transitions and external interactions, where such interpretations no longer directly apply (Remark~\ref{remark}).
We show that progress advantage naturally corresponds to an advantage function for assessing an agent's sequence of actions in this setting, providing a principled and practical signal for process-level evaluation.

Notably, progress advantage has several appealing properties. It is \emph{annotation-free} and computed from checkpoint pairs that already exist as artifacts of post-training. It is \emph{general} and valid for most of mainstream RL algorithms, including those with explicit KL penalties such as GRPO~\cite{shao2024deepseekmath} and those with only clipping-based surrogates such as DAPO~\cite{yu2025dapo} (Proposition~\ref{proposition:clip_kl_solution}). It is \emph{domain-agnostic} because it emerges from the general post-training phase rather than the task-specific adaptation stage, transferring across tasks without needs for retraining. 
See Figure~\ref{fig:illustration} for an illustrative overview.

We extensively validate progress advantage across three inference-time applications on multiple agent benchmarks (BFCLv4-MT~\cite{patil2025berkeley}, WebShop~\cite{yao2022webshop}, AgentDojo~\cite{debenedetti2024agentdojo}, $\tau^2$-bench~\cite{barres2025tau}, and Who \& When~\cite{zhang25which}) and four model families (Gemma4~\cite{gemma42026modelcard},  {Qwen3.5}~\cite{qwenteam2026qwen35omni}, {Qwen3}~\cite{yang2025qwen3}, and {Olmo3}~\cite{olmo2025olmo}). In {test-time scaling}, progress advantage scores best-of-$N$ trajectory candidates to boost task success rates, outperforming confidence-based baselines, pre-trained reward models, and even task-specific PRMs (Sec.~\ref{sec:exp:tts}). In {uncertainty quantification}, it predicts trajectory-level success or failure with substantially higher AUROC than all baselines, including pre-trained PRMs or a powerful proprietary LLM-as-a-Judge baseline (Sec.~\ref{sec:exp:uq}). In {failure attribution}, it localizes the error step in multi-agent systems, approaching the step-level prediction accuracy of a method specifically trained for this task (Sec.~\ref{sec:exp:fa}). These results hold consistently across model families and benchmarks, suggesting that the signal captured by progress advantage is robust and broadly useful.

\textbf{Contribution:} \textbf{(1)} We establish the foundation of implicit reward formulation in stochastic environment and derive \textit{progress advantage} for LLM agents trained by a broad class of RL algorithms; \textbf{(2)} We demonstrate its effectiveness across three practical inference-time applications (test-time scaling, uncertainty quantification, and failure attribution), where it outperforms pre-trained reward models without any task-specific training; \textbf{(3)} We provide analyses characterizing how progress advantage works, offering practical guidance and insights for real-world adoption.

\section{Preliminary} \label{sec:prelim}
\vspace{-0.2cm}
\paragraph{Problem setup and notation.}  
We model the stochastic agents operating in the multi-turn interaction settings as a general token-level Markov Decision Process (MDP), specified by a tuple $(\mathcal{S},\mathcal{A},f,r,\rho)$. The state space $\mathcal{S}$ contains states $s_t$ representing the full sequence of tokens generated and observed up to time $t$, \emph{i.e.}, $s_t=(s_0,a_0,...,s_{t-1},a_{t-1})$. The initial state $s_0\sim\rho$ corresponds to the input prompt, such as a task specification or a user's initial query to the agent, sampled from the prompt distribution $\rho$. The action $a_t \in \mathcal{A}$ denotes the token generated at step $t$ by the agent policy $a_t\sim\pi(\cdot|s_t)$. The state transition dynamics is shaped by function $f:\mathcal{S}\times\mathcal{A}\rightarrow\mathcal{S}$ which is a stochastic transition, $s_{t+1}\sim f(\cdot|s_t,a_t)$ with a valid probability distribution ($\sum f(s_{t+1}|s_t,a_t)=1$) if $a_t$ is the end of sequence token (\texttt{EOS}) for each step. Otherwise, it is a concatenation $s_{t+1}=f(s_t,a_t)=[s_t;a_t]$. The stochastic transition brings external observations, such as user messages or tool-calling outputs, into the agent's context. Finally, a reward function $r:\mathcal{S}\times\mathcal{A}\rightarrow\mathbb{R}$ produces a scalar reward per token. The objective is to find a policy that maximizes the expected cumulative rewards. %$\sum_{i=t}^{T}r(s_i,a_i)$ given the current state $s_t$ up to the final step $T$.

\vspace{-0.2cm}
\paragraph{RL fine-tuning of LLMs.} Beyond imitation learning from expert trajectories through supervised fine-tuning (SFT), modern LLMs undergo a reinforcement learning phase, as a central component of post-training. Several popular algorithms have been proposed to realize this phase, including PPO~\cite{schulman2017proximal} and GRPO~\cite{shao2024deepseekmath}. Despite varying in their advantage estimation techniques and surrogate objectives, these methods share a common abstraction, which is a KL-regularized reward maximization problem:
{\small
\begin{align} \label{eq:kl-constrained-rl-obj}
\max_{\pi_{\theta}} J(\pi_{\theta})
=\max_{\pi_{\theta}}\mathbb{E}_{a_{t}\sim\pi_{\theta}(\cdot|s_t)}\left[\sum_t r(s_t,a_t)
-\beta  \log \frac{\pi_{\theta}(a_t|s_t)}{\pi_{\rm ref}(a_t|s_t)} \big| s_0 \sim \rho \right],
\end{align}}
\noindent where $\beta>0$ is the regularization coefficient and $\pi_{\rm ref}$ is a reference policy, typically built with pre-trained or SFT checkpoints. 

\vspace{-0.2cm}
\paragraph{Reward reparameterization in deterministic MDP.}
The objective in Eq.~\ref{eq:kl-constrained-rl-obj} admits an equivalent formulation as a maximum-entropy RL problem, which has a fixed point solution~\cite{ziebart2010modeling}, $\pi^{*}(a_t|s_t)=\exp\big( \frac{Q^{*}(s_t,a_t)-V^{*}(s_t)}{\beta} \big)$,
where $Q^*(s,a)$ is the optimal action-value function, representing the expected cumulative future reward from the state-action pair under $\pi^*$, and $V^*(s)=\mathbb{E}_{\pi^*}[Q^{*}(s,a)]$ denotes the optimal value function under $\pi^{*}$.
The \textit{Bellman equation} relates these terms recursively: 
{\small
\begin{align} \label{eq:bellman}
Q^{*}(s_t,a_t)=\begin{cases}
r(s_t,a_t)+\beta \log\pi_{\rm ref}(a_t|s_t)+\mathbb{E}_{s_{t+1}\sim f}[V^{*}(s_{t+1})], & \text{if}~s_{t+1}~\text{is not terminal} \\
r(s_t,a_t)+\beta \log\pi_{\rm ref}(a_t|s_t), & \text{otherwise.}
\end{cases}
\end{align}
\vspace{-0.5em}
}

When the MDP is deterministic, prior work~\cite{rafailov2023direct,rafailov2024from} showed that $\mathbb{E}[V^{*}(s_{t+1})]$ reduces to $V^{*}(s_{t+1})$, resulting in a clean form of implicit reward,
{\small
\begin{align} \label{eq:ir:dpo}
    r(s_t,a_t):=\beta\log\frac{\pi^{*}(a_t|s_t)}{\pi_{\rm ref}(a_t|s_t)},
\end{align}
}
\noindent where $V^{*}(s_T)=0$ for all terminal states. 
However, the deterministic MDP assumption---while sufficient for text completion-based non-interactive reasoning tasks---\emph{breaks down in multi-turn stochastic agent settings}, where external observations such as user responses, tool outputs, and environment feedback introduce non-deterministic transitions. Beyond this theoretical gap, there is also a utilization gap. Prior work has primarily leveraged implicit rewards as a {training-time} objective for policy optimization~\cite{rafailov2023direct,azar2024general,ethayarajh2024kto,hong2024orpo,meng2024simpo}, yet their potential as {inference-time} scoring signals remains underexplored.

\vspace{-0.2cm}
\section{Implicit Process Reward Modeling for LLM Agents} \label{sec:method}
Building PRMs for agents is notoriously difficult due to the long-horizon, stateful nature of the agentic environment, making step-level annotation prohibitively expensive, and Monte Carlo estimation---the standard workaround in reasoning tasks~\cite{wang2024math,luo2024improve}---becomes infeasible.
Moreover, PRMs trained for a specific task often exhibit poor generalization across different experimental setups~\cite{gao2023scaling,mao2025information}.
Here, we take a fundamentally new approach. Rather than collecting process annotations or training dedicated PRMs, we show that general RL post-training already yields effective per-token scores at inference time, which aggregate into reliable process-level signals for free.

\subsection{Implicit Rewards Under Stochastic Transitions in Agent Systems} \label{sec:method:ir}

We begin by revisiting the implicit reward formulation in Eq.~\ref{eq:ir:dpo}. An appealing property of this result is that it holds for \emph{any} policy satisfying the KL-regularized RL fixed-point condition, including models that have already undergone standard post-training. That is, given an RL-trained checkpoint $\pi^{*}$ and its reference $\pi_{\rm ref}$ (e.g., \texttt{Qwen3.5-9B} and \texttt{Qwen3.5-9B-Base}), one can readily compute per-token implicit rewards as $\log \pi^{*}(a_t|s_t) - \log \pi_{\rm ref}(a_t|s_t)$, without any reward modeling. This quantity approximates the underlying reward that was maximized during RL fine-tuning. Although this reward reparameterization has been widely adopted to guide RL fine-tuning, no existing works have studied its potential in plug-and-play post-development scenarios for LLM agents that we will explore.

Can we justify the use of the same formulation as an inference-time process reward for agents? Unfortunately, no. The clean cancellation in Eq.~\ref{eq:ir:dpo} relies on deterministic transitions, which fail to hold whenever the agent receives stochastic observations. Under a stochastic transition map $f(\cdot|s_t, a_t)$, the implicit reward acquires additional value function terms that do not cancel as below.

\begin{remark}[\textbf{Implicit Reward in Stochastic MDP}] 
\textit{The closed-form solution of KL-regularized RL and the Bellman equation under the MDP with a stochastic transition map $f(\cdot|s_t,a_t)$ results in the following token-level reward for $t=0,...,T-1$ and trajectory-level reward,
    \begin{align} 
     r(s_t,a_t)&=\beta\log\frac{\pi^{*}(a_t|s_t)}{\pi_{\rm ref}(a_t|s_t)} +V^{*}(s_{t})-\mathbb{E}_{s_{t+1}\sim f(\cdot|s_t,a_t)}[V^{*}(s_{t+1})], \label{eq:ir:sto-exact} \\
     \sum_{t=0}^{T-1} r(s_t,a_t)&=\sum_{t=0}^{T-1}\beta\log\frac{\pi^{*}(a_t|s_t)}{\pi_{\rm ref}(a_t|s_t)} +\sum_{t=0}^{T-1}\delta_{t}, \label{eq:ir:sto-exact-sum} 
\end{align}
\noindent where $\delta_{t}=V^{*}(s_{t})-\mathbb{E}_{s_{t+1}\sim f(\cdot|s_t,a_t)}[V^{*}(s_{t+1})]$ and $T$ denotes the trajectory length.}
\label{remark}
\end{remark}
Please refer to the Appendix~\ref{sec:apdx:derivation-reward} for the derivation. The residual terms $\delta_t$ in Eq.~\ref{eq:ir:sto-exact-sum} capture the discrepancy between the value of the current state and the expected value of the next state under the stochastic transition, which vanishes through telescoping sum when transitions are deterministic. Since $V^*$ is not directly accessible from the policy pair alone, the log-probability ratio no longer recovers the exact reward. This raises a natural question: if the exact reward is out of reach with only the given policy distributions, can we still extract a useful process-level signal in a stochastic MDP?

\vspace{-0.2cm}
\subsection{Progress Advantage for Agents in Stochastic World} \label{sec:method:ipr-agent}
Although the exact reward is irrecoverable from policy distributions alone, we show that a closely related and practically sufficient signal \emph{is} directly derivable. The key insight is to shift the goal: instead of recovering the absolute reward $r(s_t, a_t)$, we target the advantage function $A(s_t, a_t)$: the relative merit of taking actions $a_t$ compared to the average action under the optimal policy at state $s_t$. This turns out to have a remarkably clean form, as stated below.
\begin{proposition}[\textbf{Progress Advantage in Stochastic MDP}] \label{prop:prog_adv}
    Let $\tilde\pi^{*}$ be an optimal policy under the KL-regularized RL objective (Eq.~\ref{eq:kl-constrained-rl-obj}) with $\beta>0$, shaped with the reference policy $\pi_{\rm ref}$ where $\pi_{\rm ref}(a|s)>0$ for any $a\in\mathcal{A}$ and $s\in\mathcal{S}$. 
    Then, the optimal advantage function is exactly recovered by the log probability ratio between $\tilde{\pi}^{*}$ and $\pi_{\rm ref}$ for any state and action:
    \begin{equation} \label{eq:prog_adv}
        \tilde{A}^{*}(s,a)=\tilde{Q}^{*}(s,a)-\tilde{V}^{*}(s)=\beta\log\frac{\tilde{\pi}^{*}(a|s)}{\pi_{\rm ref}(a|s)},~~~~\forall s \in \mathcal{S}, a \in \mathcal{A}.
    \end{equation}
\end{proposition}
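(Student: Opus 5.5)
We will prove the identity by writing the KL-regularized problem in Eq.~\ref{eq:kl-constrained-rl-obj} as a soft (maximum-entropy) Bellman recursion and reading off the optimal policy in closed form at each state. First we fix the soft value definitions for the stochastic MDP. The optimal soft action-value is
\[
\tilde Q^*(s,a)=r(s,a)+\mathbb{E}_{s'\sim f(\cdot|s,a)}[\tilde V^*(s')],
\]
where the expectation is replaced by deterministic concatenation for non-\texttt{EOS} tokens and equals $0$ at terminal states. The optimal value is
\[
\tilde V^*(s)=\max_{\pi(\cdot|s)}\ \mathbb{E}_{a\sim\pi}\Bigl[\tilde Q^*(s,a)-\beta\log\tfrac{\pi(a|s)}{\pi_{\rm ref}(a|s)}\Bigr].
\]
These are the soft Bellman optimality equations for Eq.~\ref{eq:kl-constrained-rl-obj}. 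The KL term is paid at the current step, so it enters $\tilde V^*$ and not $\tilde Q^*$. This is a different bookkeeping from Eq.~\ref{eq:bellman}, which absorbs $\beta\log\pi_{\rm ref}$ into $Q^*$. We will state this convention explicitly. Under the other convention the identity holds up to a shift of $Q^*$ by $\beta\log\pi_{\rm ref}$.

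Second, for fixed $s$ we solve the inner maximization over the simplex. It is a concave problem: a linear term plus $\beta$ times entropy relative to $\pi_{\rm ref}$. The Lagrangian (equivalently, the Gibbs variational principle) gives
\[
\tilde\pi^*(a|s)=\frac{\pi_{\rm ref}(a|s)\exp\bigl(\tilde Q^*(s,a)/\beta\bigr)}{Z(s)},\qquad Z(s)=\sum_{a}\pi_{\rm ref}(a|s)\exp\bigl(\tilde Q^*(s,a)/\beta\bigr).
\]
The assumption $\pi_{\rm ref}>0$ guarantees full support, so the logarithm below is defined for every $a$, and $\beta>0$ makes the maximizer unique. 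Substituting back gives the maximum value $\tilde V^*(s)=\beta\log Z(s)$. Taking logarithms of the policy expression then yields
\[
\beta\log\frac{\tilde\pi^*(a|s)}{\pi_{\rm ref}(a|s)}=\tilde Q^*(s,a)-\beta\log Z(s)=\tilde Q^*(s,a)-\tilde V^*(s)=\tilde A^*(s,a),
\]
which is Eq.~\ref{eq:prog_adv}.

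The key observation is that stochasticity of $f$ enters only inside $\tilde Q^*$, through $\mathbb{E}_{s'}[\tilde V^*(s')]$. The per-state Gibbs argument never needs to unroll the transition, so no telescoping is required. This is exactly the difference from Remark~\ref{remark}, where one tries to isolate $r$ and is left with the residuals $\delta_t$.

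The main obstacle is justifying that the recursively defined $\tilde V^*$ and $\tilde Q^*$ exist and that the per-state greedy policy is globally optimal for Eq.~\ref{eq:kl-constrained-rl-obj}. We will handle this by backward induction over the finite horizon $T$: terminal values are $0$, and at each step the soft Bellman operator is well defined because $Z(s)$ is a finite positive sum. A standard performance-difference or dynamic-programming argument then shows that any policy's regularized return is at most $\tilde V^*(s_0)$, with equality iff it matches the Gibbs form at every reachable state. Hence any optimal $\tilde\pi^*$ satisfies the identity on reachable states. For unreachable states we define $\tilde\pi^*$ by the same Gibbs formula, which gives the claim for all $s$.
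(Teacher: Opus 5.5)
Your proposal is correct and follows essentially the same route as the paper's proof. Like the paper, you define $\tilde Q^*(s,a)=r(s,a)+\mathbb{E}_{s'\sim f}[\tilde V^*(s')]$ with the KL penalty charged in $\tilde V^*$, solve the per-state KL-regularized problem by Lagrangian/Gibbs to get $\tilde\pi^*\propto\pi_{\rm ref}\exp(\tilde Q^*/\beta)$, substitute back to obtain $\tilde V^*=\beta\log Z$, and read off the log-ratio. Your additions are accurate refinements consistent with the paper's tilde convention: backward induction in place of the paper's bare appeal to the policy improvement theorem, the reachable-state caveat, and the remark that under the Eq.~\ref{eq:bellman} bookkeeping $Q^*-V^*$ would instead equal $\beta\log\tilde\pi^*$.
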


The proof (Appendix~\ref{sec:apdx:derivation-advantage}) follows from the Lagrangian solution of the per-state optimization induced by the soft Bellman equation. 
Critically, while environment stochasticity complicates exact reward recovery (Remark~\ref{remark}, the advantage function we derived natively isolates this by definition. Rather than canceling out the stochastic algebraically, the log probability ratio term naturally absorbs the expected future values, allowing us to extract the exact advantage $Q^{*}(s,a) - V^{*}(s)$ without requiring knowledge of the transition model.

We term this quantity \textbf{progress advantage}: it measures the expected return of taking a specific action $a_t$ relative to the average action at state $s_t$, providing a useful, fine-grained signal of whether the agent is making progress toward task completion. Fundamentally, the advantage function is the canonical quantity that drives policy improvement in reinforcement learning. The policy gradient theorem~\cite{sutton2018reinforcement} establishes that the gradient of the expected return decomposes as $\nabla_\theta J(\pi_\theta) = \mathbb{E}_{\pi_\theta}[A^{\pi_{\theta}}(s,a) \nabla_\theta \log \pi_\theta(a|s)]$---it is the advantage that determines which actions should be reinforced and which should be suppressed. Analogously, at inference time, the advantage serves as a practical state-normalized statistic for action evaluation: it tells us exactly how much better or worse a specific action is than the learned policy's expected actions, with all shared context already factored out. While the reward conflates two signals (the inherent difficulty of the state and the quality of the action taken in it),  the advantage can isolate the latter. A high reward at an easy state and a moderate reward at a hard state may reflect identical action quality; but the advantage disentangles the two. For scoring agent trajectories at inference time, this disentanglement is useful for comparing across different steps within a trajectory and across trajectories facing different environmental conditions.

\paragraph{Generality beyond explicit KL regularization.} A natural concern is whether Proposition~\ref{prop:prog_adv} applies only to methods with an explicit KL penalty (e.g., PPO~\cite{schulman2017proximal} and GRPO~\cite{shao2024deepseekmath} with adaptive KL). In practice, many widely adopted algorithms (e.g., DAPO~\cite{yu2025dapo} and Dr.\ GRPO~\cite{liu2025understanding}) instead use a clipping-based surrogate objective. We show that these methods are also covered:
\begin{proposition}[\textbf{Implicit KL Constraint of Clipping Surrogate RL}]
\label{proposition:clip_kl_solution}
Let $\pi_{\rm ref}$ and $\pi_\theta$ be the reference and target policies sharing the same support. Define the importance sampling ratio as $R(s,a) = \frac{\pi_\theta(a \mid s)}{\pi_{\rm ref}(a \mid s)}$. If optimization enforces a per-sample constraint $R(s, a) \in [1 - \varepsilon, 1 + \varepsilon]$ for all $(s, a)$ and a small $\varepsilon > 0$, then $D_{\rm KL}(\pi_{\theta} \parallel \pi_{\rm ref}) \lesssim \frac{\varepsilon^2}{2}$, similarly for reverse KL, locally at $R(s,a) \approx 1$. 
\end{proposition}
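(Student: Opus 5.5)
The plan is to bound the KL divergence state by state and then average over states. Fix a state $s$ and write the per-state divergence as an expectation over the reference policy,
\[
D_{\rm KL}\bigl(\pi_\theta(\cdot\mid s)\,\|\,\pi_{\rm ref}(\cdot\mid s)\bigr)
=\sum_a \pi_{\rm ref}(a\mid s)\,R(s,a)\log R(s,a).
\]
This rewriting uses the shared-support assumption so that $R$ is well defined. Set $u(s,a)=R(s,a)-1$. The clipping constraint gives $|u|\le\varepsilon$ pointwise. Normalization of both policies gives the key identity $\sum_a \pi_{\rm ref}(a\mid s)\,u(s,a)=0$.

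Next, expand $\phi(R)=R\log R$ around $R=1$:
\[
\phi(1+u)=u+\tfrac{u^2}{2}-\tfrac{u^3}{6}+O(u^4).
\]
Take the expectation under $\pi_{\rm ref}$. The linear term vanishes by the zero-mean identity, which leaves
\[
D_{\rm KL}=\tfrac12\,\mathbb{E}_{\pi_{\rm ref}}[u^2]+O(\varepsilon^3)\le \tfrac{\varepsilon^2}{2}+O(\varepsilon^3).
\]
This is exactly the "$\lesssim$" statement, local at $R\approx1$.

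To make the bound non-asymptotic, I would first try the elementary inequality $\log R\le R-1$. It gives $R\log R\le R(R-1)=u+u^2$, hence $D_{\rm KL}\le \mathbb{E}_{\pi_{\rm ref}}[u^2]\le\varepsilon^2$. This shows the order is right, with the sharp constant $1/2$ recovered by the Taylor remainder. For the reverse KL I would repeat the argument with $-\log R$. Writing $-\log(1+u)=-u+\tfrac{u^2}{2}-\tfrac{u^3}{3}+\dots$, the mean-zero linear term drops again and leaves $\tfrac12\mathbb{E}[u^2]+O(\varepsilon^3)$. Finally, average over the state distribution (any $s$, e.g.\ visited states under $s_0\sim\rho$). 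The bound is uniform in $s$, so it carries over to the trajectory-averaged KL.

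The main obstacle is interpretive rather than computational. Clipping in PPO/DAPO only removes the gradient incentive outside $[1-\varepsilon,1+\varepsilon]$ on sampled actions; it does not literally enforce the constraint for all $(s,a)$. Moreover, the ratio in practice is taken against $\pi_{\rm old}$ rather than $\pi_{\rm ref}$. I would therefore state the per-sample constraint as a hypothesis, exactly as the proposition does, and keep the argument local.

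The concluding remark would connect back to Proposition~\ref{prop:prog_adv}. A trust-region bound $D_{\rm KL}\le\varepsilon^2/2$ is the constrained form of the KL-regularized objective (Eq.~\ref{eq:kl-constrained-rl-obj}). Lagrangian duality then supplies some multiplier $\beta>0$ whose penalized solution coincides with the constrained optimum. This is what lets the progress-advantage identity apply to clipping-based methods as well.
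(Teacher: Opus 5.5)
Your proposal is correct and follows essentially the same route as the paper: write both divergences as $\pi_{\rm ref}$-expectations of functions of $R=1+u$, Taylor-expand around $R=1$, use $\mathbb{E}_{\pi_{\rm ref}}[u]=0$ to remove the linear term, and bound the remaining $\tfrac12\mathbb{E}_{\pi_{\rm ref}}[u^2]$ by $\tfrac{\varepsilon^2}{2}$. Your additions are sound refinements the paper omits: the fully rigorous bound $D_{\rm KL}\le\mathbb{E}_{\pi_{\rm ref}}[u^2]\le\varepsilon^2$ via $\log R\le R-1$, the explicit per-state averaging, and keeping the $\tfrac{\varepsilon^2}{2}$ bound asymptotic. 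The last point is right, since for example $u=\pm\varepsilon$ with equal mass gives $\tfrac{\varepsilon^2}{2}+\tfrac{\varepsilon^4}{12}$.
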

Proposition~\ref{proposition:clip_kl_solution} (proof in Appendix~\ref{sec:apdx:clip_kl_proof}) establishes that clipping-based surrogate objective~\cite{schulman2017proximal} derives a conservative KL-constrained solution policy governed by the small clipping threshold $\epsilon$.

\begin{tcolorbox}[enhanced,attach boxed title to top center={yshift=-5.5mm,yshifttext=-2.5mm},
  colback=white,colframe=gray!75!black,colbacktitle=red!80!black,
  title=,fonttitle=\bfseries,
  boxed title style={size=small,colframe=red!50!black}]
\vspace{-0.4em}
\textbf{Takeaways.} We derived progress advantage in stochastic MDP, which is computed from the optimal behavior policy and reference policy pair (Proposition~\ref{prop:prog_adv}) to measure the relative merit of each action for scoring agent trajectories. 
The progress advantage is valid for the broad class of policy trained via RL objectives with regularization, whether they contain an explicit KL regularization or a clipping-based surrogate (Proposition~\ref{proposition:clip_kl_solution}).
\vspace{-0.4em}
\end{tcolorbox}

\subsection{From Theory to Practical Implementation}\label{sec:method:practical}

Translating progress advantage into practice requires three design decisions: specifying policies, aggregating per-token advantages into process-level scores, and representing the token probability.

\vspace{-0.2cm}
\paragraph{Policy specification.} 
For $\tilde{\pi}^{*}$, one can use any policy models trained via a KL-regularized or a clipping-based  RL objective, covering virtually most mainstream post-training pipelines in use today.
Meanwhile, the selection of the reference policy $\pi_{\rm ref}$ depends on the RL pipeline adopted to get $\tilde{\pi}^{*}$: it can be a pre-trained base checkpoint in RL-Zero settings~\cite{guo2025deepseek}, an SFT checkpoint in standard single-stage RL, or a previous round's policy in online iterative RL~\cite{dong2024rlhf}. The key consideration is that $\pi_{\rm ref}$ should be neither too distant nor too close to $\tilde{\pi}^{*}$.
If too far, the log-ratio is dominated by generic distributional differences rather than task-relevant distinctions; if too close, the signal is insufficient to distinguish between good and poor actions. 
Since most model providers do not release their intermediate checkpoints, we confine our analysis to publicly available policy pairs (see Appendix~\ref{sec:apdx:exp-setup} for the full list). In Figure~\ref{fig:policymerging_uq} of Sec.~\ref{sec:exp:anal}, we further empirically analyze how the choice of reference policy affects the utility of the progress advantage.

\begin{wraptable}{r}{0.6\textwidth}
\vspace{-1.05em}
\centering
\captionof{table}{Sub-trajectory advantage aggregation over a set of consecutive token indices $\mathcal{I}$.}
\vspace{-0.65em}
\scriptsize
\resizebox{0.6\textwidth}{!}{
\begin{tabular}{@{}cl@{}}
\toprule
\textbf{Aggregation} & \textbf{Interpretation}                \\ \midrule
$\sum_{t\in \mathcal{I}} \tilde{A}^{*}(s_t,a_t)$& Vanilla sub-trajectory advantage        \\
$\frac{1}{|\mathcal{I}|}\sum_{t\in \mathcal{I}} \tilde{A}^{*}(s_t,a_t)$& Per-token average advantage \\
$\sum_{t\in \mathcal{I}} w_{t}\tilde{A}^{*}(s_t,a_t)$& Position-weighted advantage \\
$\min_{t\in \mathcal{I}}\log\tilde{\pi}^{*}-\min_{t\in \mathcal{I}}\log\pi_{\rm ref}$ (resp. $\max$) & Extreme token advantage \\ 
\bottomrule
\end{tabular}\label{tab:aggregation}
}
\vspace{-0.975em}
\end{wraptable}
\paragraph{Progress advantage aggregation.} Since Eq.~\ref{eq:prog_adv} produces a token-level advantage $\tilde{A}^{*}(s_t, a_t)$ at each position $t$, we need aggregation strategies to obtain step-level and trajectory-level scores suited to each application. Table~\ref{tab:aggregation} shows some natural choices. Simple summation yields the standard additive trajectory advantage, while averaging produces a length-normalized variant that prevents long trajectories from being scored higher. In addition, one can inject an inductive bias based on their knowledge to implement position-weighted advantage. Extreme token advantage (min or max) captures the worst or best-case token advantage within a sub-trajectory. 
The aggregation choice can meaningfully affect the quality of progress advantage (Figure~\ref{fig:agg_heatmap}), and we pick the best per task. 

\paragraph{Representing the token probability.} The clean implementation of Eq.~\ref{eq:prog_adv} is directly using $\tilde{\pi}^{*}(a_t|s_t)$ (and $\pi_{\rm ref}(a_t|s_t)$), but the pure token probability is often noisy and a source of instability during RL~\cite{tang2025few,qi2026rethinking}. Thus, we also explore a top-$k$ average token probability variant in Appendix~\ref{sec:apdx:progadv} and~\ref{sec:apdx:exp-result}.

\begin{table}[t]
\caption{\textbf{Test-time scaling through best-of-8 sampling}. We compare reward scoring methods across four benchmarks and two LLM backbones, reporting the success rate (\%) of the selected trajectory. Our progress advantages successfully boost the success rate, especially when the non-zero-temperature exploratory behavior is beneficial to the task, i.e., WebShop and $\tau^2$-Airline.}
\label{tab:tts-bon-main}
\centering
\resizebox{1.0\textwidth}{!}{%
\begin{tabular}{lc|cc|cc|cc|cc|cc}
\toprule
Scoring Method     & Training & \multicolumn{2}{c}{BFCLv4-MT} & \multicolumn{2}{|c}{WebShop} & \multicolumn{2}{|c}{AgentDojo} & \multicolumn{2}{|c}{$\tau^{2}$-Airline} & \multicolumn{2}{|c}{Average} \\ \midrule
     & & \texttt{Gemma4-4B} & \texttt{Qwen3.5-9B} & \texttt{Gemma4-4B} & \texttt{Qwen3.5-9B}  & \texttt{Gemma4-4B} & \texttt{Qwen3.5-9B} & \texttt{Gemma4-4B}    & \texttt{Qwen3.5-9B} & \texttt{Gemma4-4B}   & \texttt{Qwen3.5-9B}   \\ \midrule
\rowcolor{mygray} 
Pass@N (oracle)    & \xmark& 22.0 & 48.0 & 53.0 & 49.0 & 48.5 & 94.8 & 58.0 & 78.0 & 45.4 & 67.5 \\ 
\rowcolor{mygray} 
Greedy Decoding     & \xmark& 19.0 & 42.5 & 32.0 & 21.0 & 48.5 & 94.8 & 34.0 & 60.0 & 33.4 & 54.6 \\ 
Mean-of-N & \xmark& 17.5 & 38.5 & 41.6 & 26.4 & 38.8 & \underline{89.0} & \underline{34.5} & \underline{64.8} & 33.1 & 54.7 \\ \midrule
WildReward-8B~\cite{peng2026wild} & \cmark & \textbf{20.0} & \textbf{42.5} & 41.0 & 26.0 & \underline{43.3} & 86.6 & 28.0 & 64.0 & 33.1 & 54.8 \\
ThinkPRM-7B~\cite{khalifa2025process} & \cmark & 18.5& 37.0 & 38.0 & 22.0  & 37.1 & 85.6 & 30.0 & 64.0 & 30.9 & 52.2 \\
ThinkPRM-14B~\cite{khalifa2025process} & \cmark & \underline{19.0} & \underline{40.0} & \underline{43.0} & \underline{33.0} & \textbf{44.3} & 88.7 & 28.0 & 58.0 & \underline{33.6} & {54.9} \\ \midrule
Self-Certainty~\cite{kang2025scalable} & \xmark & 15.0 & 34.5 & 34.0 & 22.0 & 33.0 & 85.6 & 34.0 & 64.0 & 29.0 & 51.5 \\
DeepConf Tail~\cite{fu2026deep}   & \xmark & 15.5 & 36.0 & 39.0 & 28.0 & 35.1 & 86.6 & 36.0 & 62.0 & 31.4 & 53.2 \\
DeepConf B10~\cite{fu2026deep}  & \xmark & 15.5 & 34.5 & 35.0 & 30.0 & 30.9 & 86.6 & 28.0 & \textbf{72.0} & 27.4 & \underline{55.8} \\ \midrule
\rowcolor{lightpurple}
Progress Advantage & \xmark& \underline{19.0} & \textbf{42.5} & \textbf{45.0} & \textbf{42.0} & \underline{43.3} & \textbf{91.8} & \textbf{48.0} & \textbf{72.0} & \textbf{38.8} & \textbf{62.1} \\ \bottomrule
\end{tabular}}%}
\vspace{-1.25em}
\end{table}
\section{Empirical Validation} \label{sec:exp}
We evaluate progress advantage on three inference-time applications that collectively test whether the signal is useful for (1) parallel test-time scaling through best-of-N sampling, (2) uncertainty quantification, and (3) failure attribution. Appendix~\ref{sec:apdx:exp-setup} and \ref{sec:apdx:exp-result} cover additional details and results.
\vspace{-0.2cm}
\subsection{Setup} \label{sec:exp:setup}
\paragraph{Benchmarks.} We ground our evaluation in four benchmarks that represent realistic agentic workloads: BFCLv4-MT~\cite{patil2025berkeley} (multi-turn tool calling), WebShop~\cite{yao2022webshop} (tool-augmented online shopping), AgentDojo~\cite{debenedetti2024agentdojo} (tool-augmented general task solving), and $\tau^{2}$-bench~\cite{yao2024tau,barres2025tau} (conversational agents in customer-service environments). All four require multi-turn interaction with external tools in stateful environments, exercising precisely the stochastic MDP structure that motivates our approach. 
Each application defines a distinct evaluation protocol. For test-time scaling, we perform best-of-$N$ sampling by generating 8 trajectories per task, with temperature $0.7$ for $\tau^{2}$-bench and WebShop, and 0.4 for the remaining. We score these trajectories with each reward method and measure the average task success rate of the selected trajectories. For uncertainty quantification, we use trajectory-level reward to predict whether each trajectory succeeds or fails on $\tau^2$-bench, measured by AUROC~\cite{oh2026uncertainty}. Failure attribution serves a step-level failure detection, which is described separately in Sec.~\ref{sec:exp:fa}.

\vspace{-0.2cm}
\paragraph{Models.} We mainly evaluate four public LLM families: \texttt{Gemma4-4B}~\cite{gemma42026modelcard},  \texttt{Qwen3.5-9B}~\cite{qwenteam2026qwen35omni}, \texttt{Qwen3-14B}~\cite{yang2025qwen3}, and \texttt{Olmo3-7B}~\cite{olmo2025olmo}. For each, we pair the RL-trained final checkpoint with its corresponding base/intermediate checkpoint as the reference policy to build progress advantage.

\vspace{-0.2cm}
\paragraph{Baseline method.} We compare against two categories of baselines. \emph{Trained reward models}: (1) WildReward-8B~\cite{peng2026wild}, (2) ThinkPRM-7B/14B~\cite{khalifa2025process}, which are specifically trained on real-world user-agent interactions or multi-step reasoning datasets, and (3) AgentPRM~\cite{xi2025agentprm} that is specifically trained on a downstream task. \emph{Confidence-based methods}: (4) Self-Certainty~\cite{kang2025scalable}, which scores trajectories by the average token probability certainty, and (5) DeepConf~\cite{fu2026deep}, which proposes step-level confidence aggregation strategies including tail-step confidence and bottom-10\% average of step confidences.
Crucially, progress advantage and the confidence baselines require {no dedicated training}, whereas trained reward models usually require task-specific or domain-specific supervision. 
\subsection{Test-time Scaling} \label{sec:exp:tts}
We start with best-of-8 sampling scenarios in Tab.~\ref{tab:tts-bon-main}, where Pass@N denotes the pass rate of at least one of trajectories, Mean-of-N denotes the mean success rate of them, and Greedy Decoding is a zero-temperature deterministic generation. Overall, progress advantage shows stable performance across datasets and models, outperforming the expensive training-based method and confidence-based methods with significant margins (15.5\% for \texttt{Gemma4} and 11.3\% for \texttt{Qwen3.5}) on average. Notably, it consistently beats the baseline methods in cases where high-temperature exploratory trajectories are beneficial over greedy ones, e.g., WebShop and $\tau^{2}$-Airline. We hypothesize that optimal advantage signals favor setups where both the average (Mean-of-N) and ceiling (Pass@N) values of trajectories are high, connected to the theorem. Table~\ref{tab:tts-training-based-rm} in Appendix~\ref{sec:apdx:exp-result} further shows that it even outperforms AgentPRM-7B, specifically trained on a downstream task.

\subsection{Uncertainty Quantification} \label{sec:exp:uq}
\vspace{-0.1em}
\begin{table}[th!]
\caption{\textbf{Uncertainty quantification for trajectory monitoring}. We compare scoring methods across four LLM backbones on $\tau^2$-bench Airline and Retail domains to predict an agent's success on each model's greedy-decoding trajectory with trajectory-level scoring, measured by AUROC. 
}
\label{tab:uq-on-main}
\centering
\resizebox{\textwidth}{!}{%
\begin{tabular}{lc|cccc|cccc}
\toprule
Scoring Method     & Training & \multicolumn{4}{c}{$\tau^{2}$-Airline}                                          & \multicolumn{4}{|c}{$\tau^{2}$-Retail}                                           \\  \midrule
                   &          & \texttt{Gemma4-4B}  & \texttt{Qwen3.5-9B} & \texttt{Qwen3-14B} & \texttt{Olmo3-7B}   & \texttt{Gemma4-4B}  & \texttt{Qwen3.5-9B} & \texttt{Qwen3-14B} & \texttt{Olmo3-7B}   \\ \midrule
\rowcolor{mygray}{\claudelogo} Sonnet-4.6~\cite{anthropic2026sonnet46}      & \xmark        & 0.615          & 0.726            & 0.519           & 0.715          & 0.852    & 0.899            & 0.864  & 0.656          \\
WildReward-8B~\cite{peng2026wild}      & \cmark        & 0.312          & 0.540            & 0.314           & 0.514          & \underline{0.643}    & 0.468            & \textbf{0.689}  & 0.584          \\
ThinkPRM-7B~\cite{khalifa2025process}        & \cmark        & 0.478    & 0.582            & 0.276           & 0.492          & 0.469          & 0.551            & 0.543           & \textbf{0.670} \\
ThinkPRM-14B~\cite{khalifa2025process}       & \cmark        & 0.426          & \underline{0.655}      & 0.292           & \underline{0.708}    & 0.573          & \underline{0.610}            & 0.637           & 0.544          \\ \midrule
Self-Certainty~\cite{kang2025scalable}     & \xmark        & \underline{0.840}          & 0.642            & 0.663           & 0.486          & 0.397          & 0.366            & 0.608           & 0.392       \\
DeepConf Tail~\cite{fu2026deep}      & \xmark        & 0.581          & 0.588            & \underline{0.682}           & 0.472          & 0.382          & 0.344            & 0.380           & 0.608         \\
DeepConf B10~\cite{fu2026deep}       & \xmark        & 0.834          & 0.587            & 0.636           & 0.618          & 0.416          & 0.496            & 0.582           & 0.288        \\ \midrule
\rowcolor{lightpurple}
\rowcolor{lightpurple}
Progress Advantage & \xmark        & \textbf{0.865} & \textbf{0.720}   & \textbf{0.739}     & \textbf{0.799} & \textbf{0.690} & \textbf{0.678}      & \underline{0.650}           & \underline{0.664}    \\ \bottomrule
\end{tabular}}
\vspace{-1.4em}
\end{table}
One of the key building blocks for a reliable agent in the wild is a framework for uncertainty quantification. As noted by Oh et al.~\cite{oh2026uncertainty}, quantifying uncertainty in a multi-turn interactive inference setup brings non-trivial open problems hard to tackle with existing uncertainty methods. This subsection explores the application of the progress advantage for the UQ of LLM agents. Specifically, we predict whether a trajectory generated by an agent ends with success or not by adopting the trajectory-level reward as a (un)certainty signal. Table~\ref{tab:uq-on-main} shows AUROC computed over the whole trajectory samples on $\tau^{2}$-bench (50 for Airline, 114 for Retail) across four different models, where we score the trajectory generated by each behavior model through its own log probability with (ours) and without (Self-Certainty and DeepConf) reference policy's log probability offset, or a different, pre-trained reward model. We see that the progress advantage remarkably outperforms all the baselines in $\tau^2$-Airline and also shows competitive results on $\tau^2$-Retail, demonstrating its validity under a stochastic MDP under the complex interaction scaffolding (See Figure~\ref{fig:advantage_evo} for more analyses).

\vspace{-0.3em}
\begin{wraptable}{r}{0.54\textwidth}
\vspace{-1.25em}
\caption{\textbf{Uncertainty quantification on trajectories generated by a different policy}. 
\texttt{Gemma4-4B} as a reward model scoring trajectories produced by different behavior policies on $\tau^2$-Airline. Trajectory-level AUROC over success prediction (higher is better).
}
\label{tab:uq-off-qwengemma}
\centering
\scriptsize
\begin{tabular}{lcc}
\toprule
Scoring Method     & \texttt{Qwen3.5-9B} & \texttt{Qwen3-14B} \\ \midrule
Self-Certainty~\cite{kang2025scalable} & \underline{0.587} & \underline{0.648}  \\
DeepConf Tail~\cite{fu2026deep} & 0.482 & 0.610 \\
DeepConf B10~\cite{fu2026deep} & 0.563 & 0.636  \\ \midrule
\rowcolor{lightpurple}
Progress Advantage & \textbf{0.754} & \textbf{0.727}  \\ \bottomrule
\end{tabular}
\vspace{-0.9em}
\end{wraptable}
We further test whether the progress advantage can predict the chance of success over a trajectory generated by another policy backbone model. In Table~\ref{tab:uq-off-qwengemma}, we use \texttt{Gemma4-4B} as a reward model to score trajectories of \texttt{Qwen3.5-9B} and \texttt{Qwen3-14B}  on $\tau^{2}$-Airline. We see that the progress advantage acts as an external scorer to assess the quality of the action sequence yielded by a different policy, implying its potential as an off-the-shelf reward model to monitor arbitrary trajectories.

\subsection{Failure Attribution} \label{sec:exp:fa}
\vspace{-0.2em}

\begin{wrapfigure}{r}{0.56\textwidth}
    \vspace{-0.9em}
    \centering
    \includegraphics[width=\linewidth]{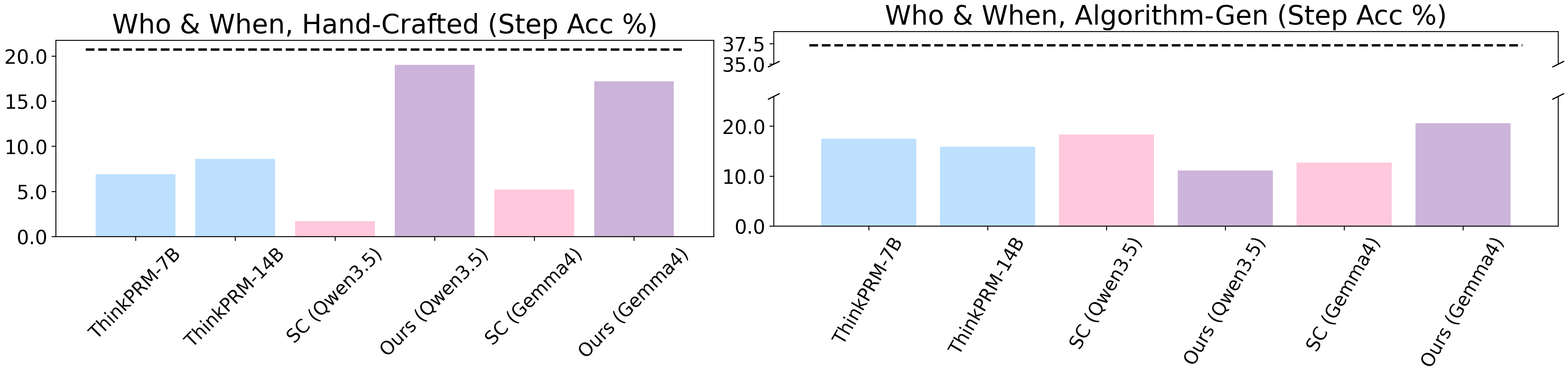}
    \vspace{-1.05em}
    \caption{\textbf{Who \& When step-level accuracy}. 
    We predict when the agent system makes a decisive error. SC denotes Self-Certainty~\cite{kang2025scalable}, and the dashed line denotes AgenTracer~\cite{zhang2025agentracer}, which is specifically trained on this.
    }
    \label{fig:fa-main}
\end{wrapfigure}
An emerging field of agentic system monitoring is \textit{failure attribution}, where we detect a step when the system would make the critical error across the whole trajectory. We evaluate PRMs on Who \& When benchmark~\cite{zhang25which}, predicting the decisive error step, $t_{\rm err}$, over pre-extracted trajectories from multi-agent systems. Here, we make a prediction as an index of the minimum per-step reward, and compare the prediction with the ground truth, i.e., $\mathbb{I}(\arg\min_t \hat{r}_t=t_{\rm err})$. As shown in Figure~\ref{fig:fa-main}, the task is notably challenging for pre-trained reward models and even for a task-specific RL-trained baseline, AgenTracer~\cite{zhang2025agentracer}. Our method shows promising results on both splits, while rivaling AgenTracer in the Hand-Crafted split, demonstrating its reliable step-level credit assignment under a carefully designed agentic harness. 
\vspace{-0.2em}

\subsection{Additional Empirical Study on Progress Advantage} \label{sec:exp:anal}
%\vspace{-0.3em}
\begin{wraptable}{r}{0.3\textwidth}
\vspace{-1.5em}
\caption{\textbf{Progress advantage and its ingredients on UQ.}}
\vspace{-0.65em}
\label{tab:component_ablation_reduced}
\centering
\tiny
\resizebox{\linewidth}{!}{
\begin{tabular}{l|cc}
\toprule
Method & Avg. Rank \\ \midrule
\rowcolor{lightpurple}
Ours & \textbf{1.44} \\
$\log\tilde{\pi}^{*}(a|s)$   & 2.25 \\
$\log\pi_{\rm ref}(a|s)$ & 2.31 \\ \bottomrule
\end{tabular}
}
\vspace{-1.35em}
\end{wraptable}
\paragraph{Contrasting $\tilde{\pi}^{*}$ and $\pi_{\rm ref}$ is better than sole.} Since the progress advantage is defined with two policies, one may wonder if we can just use one of them as a reward with the same aggregation strategies. Tab.~\ref{tab:component_ablation_reduced} provides the average rank of AUROC on $\tau^{2}$-Airline uncertainty quantification (See Tab.~\ref{tab:component_ablation} for setup and more results), confirming that progress advantage provides more reliable signals sharpened by contrasting distributions~\cite{li2023contrastive}. We go into this in Fig.~\ref{fig:qual_anal}.

\paragraph{Per-token qualitative analysis.} 
We perform fine-grained analysis to investigate whether the progress advantage produces reasonable signals related to goal achievement. Figure~\ref{fig:qual_anal} presents a case in which the agent correctly refuses a reservation cancellation request in conflict with the domain-specific constraint in $\tau^{2}$-Airline. The results reveal a clear contrast between pure policy log probability $\log\tilde{\pi}^{*}(\cdot|\cdot)$ and progress advantage $\log\frac{\tilde{\pi}^{*}(\cdot|\cdot)}{\pi_{\rm ref}(\cdot|\cdot)}$. Specifically, the pure policy log probability assigns low scores to tool-calling strings (steps 1 and 2) even though they are correct, probably because the frequency of the tool string is lower than plain natural language.
In contrast, progress advantage assigns positive scores to these strings, thanks to the offset effect of the reference log probability.
Furthermore, the policy log probability penalizes domain constraint-related terms in step 3, such as ``change of plan'' and ``business class'' which cover the key criteria of canceling a flight. Progress advantage, on the other hand, rewards most of these terms, demonstrating its awareness of goal-specific information to effectively reward actions that induce success on the task.
\begin{figure}
    \centering
    \vspace{-0.65em}
    \includegraphics[width=\linewidth]{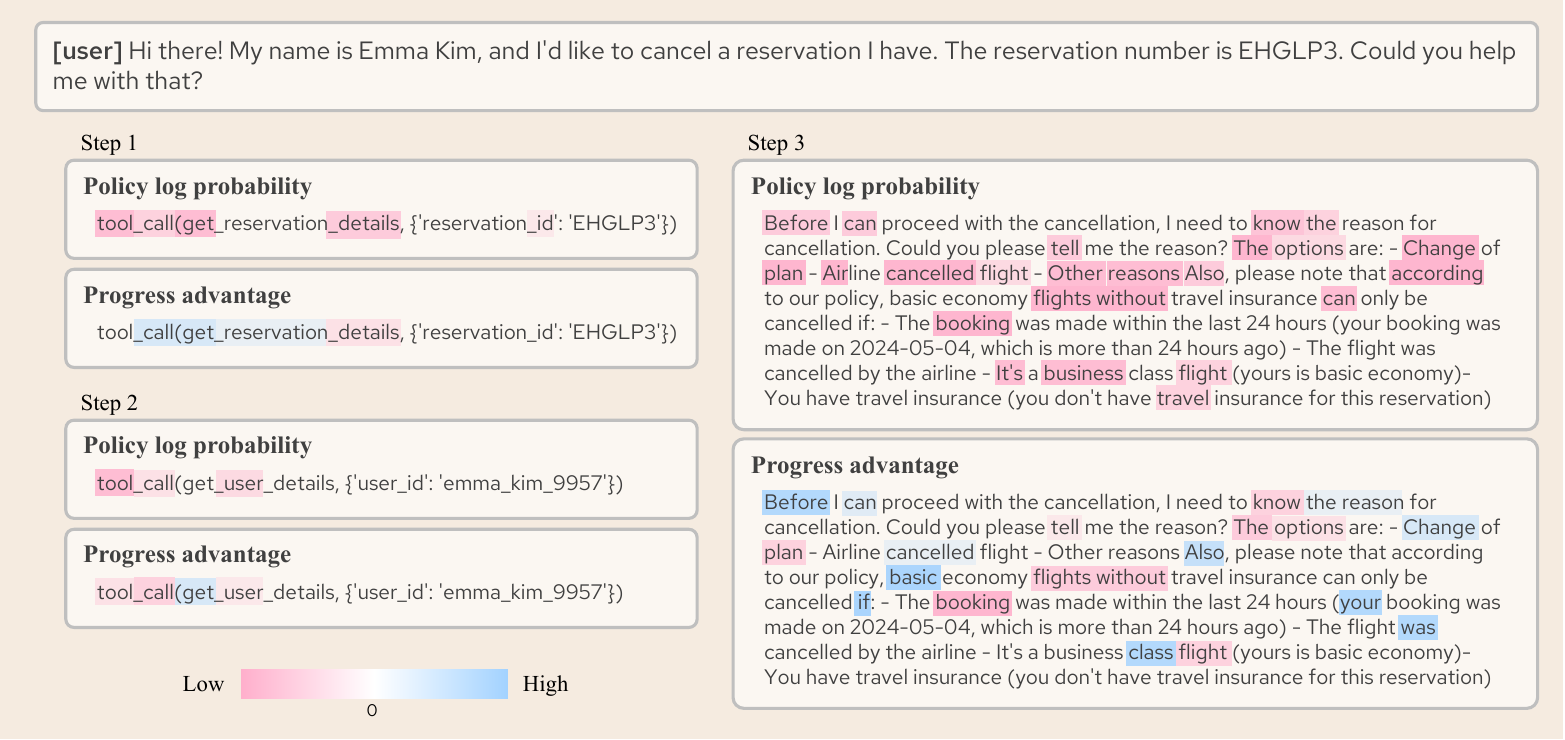}
    \vspace{-1em}
    \caption{\textbf{Qualitative analysis on token-level signals.} Progress advantage effectively rewards actions specifically helpful to achieve the downstream goal, whereas the policy log probability does not.}
    \label{fig:qual_anal}
    \vspace{-1.2em}
\end{figure}

\paragraph{Advantage aggregation strategy.}
\begin{figure*}[htb]
    \centering
    \vspace{-1em}
    \makebox[\textwidth][c]{
    \includegraphics[width=1.025\linewidth]{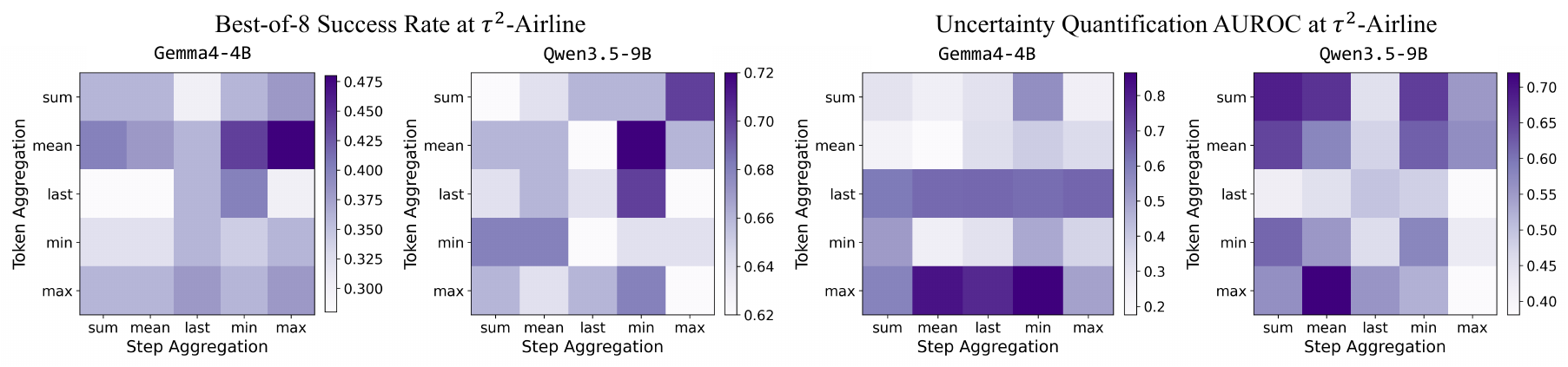}}
    \vspace{-1.0em}
    \caption{\textbf{Combinations of token and step aggregation strategy for progress advantage.} The aggregation across token and step advantages affects the effectiveness of progress advantage, and each downstream task and model shows quite a different flavor in the aggregation strategy.}
    \label{fig:agg_heatmap}
    \vspace{-0.5em}
\end{figure*}

Since our derived progress advantage (Eq.~\ref{prop:prog_adv}) serves as a token-level signal, we explore aggregation strategies at both token and step levels, following prior work on sentence-level and trajectory-level scores~\cite{zhang2023enhancing,duan2024shifting,fu2026deep}.
As shown in Figure~\ref{fig:agg_heatmap}, different aggregation combinations excel at different applications: the (\textsc{mean}, \textsc{min}) pair performs strongly for best-of-$N$ selection, while the (\textsc{max}, \textsc{mean}) pair becomes the winner for UQ.
One possible explanation is that step-level \textsc{min} aggregation penalizes trajectories containing a low-quality step, thereby favoring trajectories whose progress signal remains consistently positive across the interaction. This can be ensure a better sequence of actions during the negotiation-centric airline tasks.
For UQ, on the other hand, focusing on the extrema, such as maximum token advantage, can be a better indicator for per-step success, which may capture salient local evidence crucial for ultimate success. Then, the conventional mean operation over per-step maximum advantages can reliably stand for a per-trajectory uncertainty. This is aligned with findings from the reasoning model inference, where some important tokens drive the final success~\cite{qian2026demystifying,hwang2026oops}. Extended results are provided in Figure~\ref{fig:agg_heatmap_full_tts} and \ref{fig:agg_heatmap_full_uq}.

\begin{wrapfigure}{r}{0.35\textwidth}
    \vspace{-1em}
    \centering
    \includegraphics[width=\linewidth]{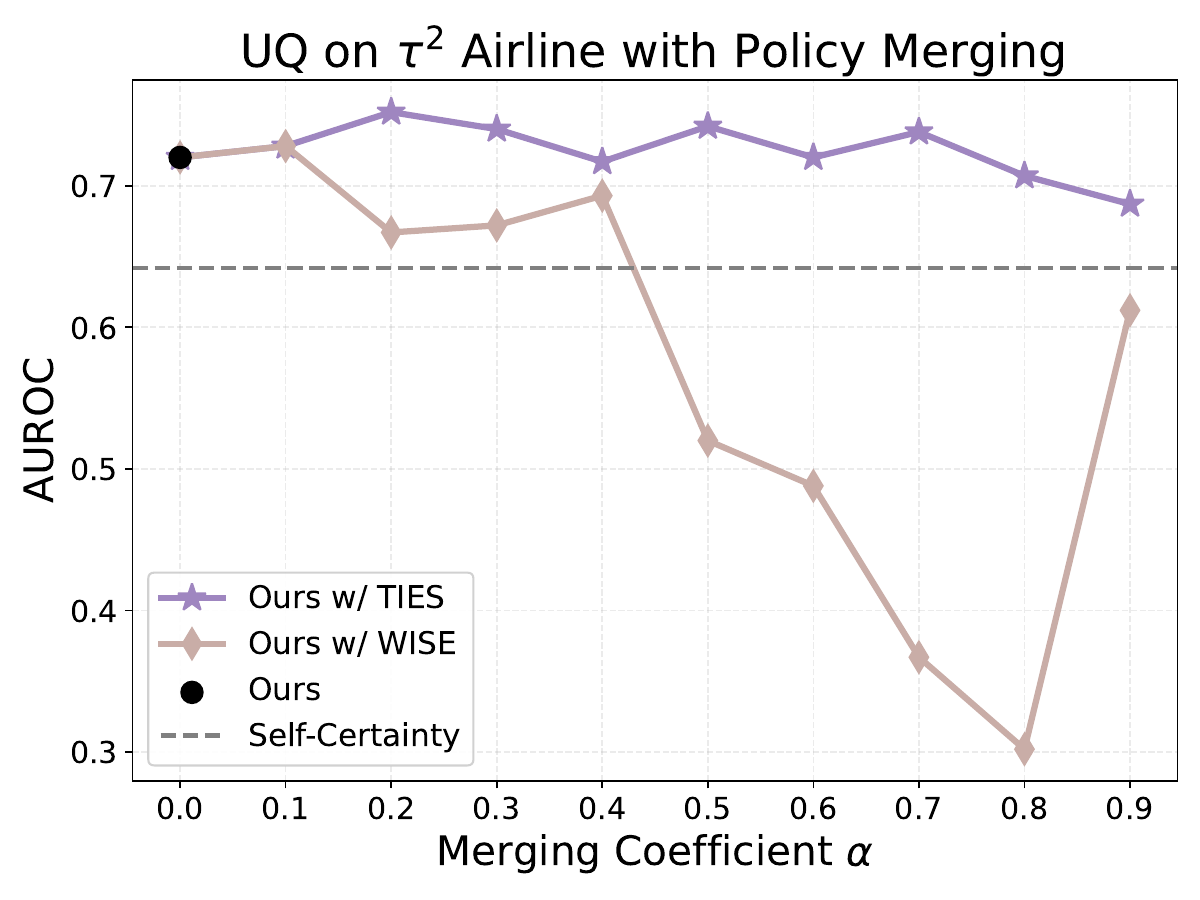}
    \vspace{-1em}
    \caption{\textbf{Varying reference policy.} We merge \texttt{Qwen3.5-9B-Base} with \texttt{Qwen3.5-9B} in the weight space and use it as $\pi_{\rm ref}$ in our progress advantage for $\tau^2$-Airline UQ.}
    \label{fig:policymerging_uq}
    \vspace{-0.85em}
\end{wrapfigure}
\paragraph{Specification of reference policy.} As noted in Sec.~\ref{sec:method:practical}, progress advantage is constructed with the behavior policy and the reference policy, and the reference policy specification becomes a design choice. Rather than simply adopting the base checkpoint version of the final policy, we test policy merging between the final and base checkpoints, $\theta_{\alpha}=\alpha \theta_{\rm final}+(1-\alpha)\theta_{\rm base}$, to get a spectrum of reference policies $\pi_{\theta_\alpha}$ for $\alpha\in\{0.1,...,0.9\}$ to construct $\log\frac{\tilde{\pi}^{*}(a|s)}{\pi_{\theta_\alpha}(a|s)}$ in Figure~\ref{fig:policymerging_uq}. Here, we adopt two different merging methods, WISE~\cite{Wortsman_2022_CVPR} as the aforementioned simple linear interpolation and TIES~\cite{yadav2023ties} as an interference-aware robust merging variant. While naive linear interpolation mostly degrades the quality of progress advantage, the robust merging variant consistently induces better results in most $\alpha$ from 0.2 to 0.7. This supports our hypothesis that the reference should not be too far or too close to the behavior policy $\tilde{\pi}^{*}$, suggesting the promise of progress advantage coupled with advanced merging methods~\cite{ortiz2023task,yang2024adamerging,yu2024language,jang2024model,oh2025dawin} to craft a sharper reference policy.
\vspace{-0.4em}
\section{Related Work} \label{sec:related_work}
\paragraph{Process Reward Modeling for Reasoning Models.} PRMs provide fine-grained supervision over intermediate reasoning steps, assisting models to improve their reasoning quality step-by-step. Early works typically formulate process supervision as a binary classification, where each step is labeled as correct or incorrect~\cite{lightman2023let,snell2025scaling}. Recent approaches move beyond step-wise classification; instead cast PRM learning as a ranking problem, drawing on Q-value theory~\cite{li2025process} or probabilistic formulations of step quality~\cite{zhang2025linking}. In parallel, because annotating reasoning trajectories at the step-level is expensive and time-consuming, several works explore efficient or automated strategies to construct step supervision~\cite{luo2024improve,wang2024math,lee2026efficient}. Notably, implicit PRMs~\cite{yuan2025free} eliminate the need for explicit step annotations during training has been shown to be effective for both test-time scaling and RL of reasoning models~\cite{cui2025process}. However,  these methods assume deterministic and non-interactive text completion for multi-step reasoning; our work differs by deriving an implicit process reward that is theoretically grounded in stochastic MDPs and validated in multi-turn agentic settings.
\paragraph{Process Reward Modeling for LLM Agents.} 
There are emerging research endeavors to build PRMs for LLM agents that use tools and interact with the user and environment to achieve a long-horizon goal.
Prior work typically relies on per-step MC estimation~\cite{choudhury2025process,xi2025agentprm} to get step-level annotations while confining the scope to relatively simple tasks with a few rounds of turns, which is unreliable~\cite{zhang2025lessons,zeng2025versaprm} and becomes infeasible in complex, long-horizon tasks.
Although Liu et al.~\cite{liu2025agentic} bypass the process supervision by adopting an implicit reward formulation, they still perform the downstream task-specific training with outcome-level supervision to learn the implicit PRMs. In contrast, we showed that an RL-trained policy, paired with some reference policy, constructs an implicit advantage function that already can be a sufficient signal to guide test-time scaling or post-deployment monitoring, \emph{without any extra training} (See Appendix~\ref{sec:apdx:context} for a broader context).
\vspace{-0.4em}
\section{Conclusion} \label{sec:conclusion}

LLMs equipped with the agentic harness operate on multi-turn, interactive environments under stochasticity, where measuring the intermediate progress over a goal-oriented trajectory brings huge opportunities; at the same time, the real bottleneck---expensive training of PRMs with process annotations.
We establish the theoretical foundation through progress advantage for implicit reward technology in stochastic MDP; offer a novel angle and recipe to build PRMs from LLM checkpoint pairs of RL fine-tuned and its base, bypassing collecting the process labels and dedicated reward model training.
Across five agentic benchmarks, four model families, and three downstream applications, progress advantage consistently outperforms confidence-based baselines and matches or beats competitive training-based reward models as well as a proprietary LLM judge.
We hope these promising results stimulate a new paradigm of future research towards a scalable and practical approach to process-level guidance and monitoring of real-world agentic systems.

\begin{ack}
We sincerely thank Jiatong Li, Leitian Tao, Sangyun Lee, and Jiaying Fang for their faithful proofreading and professional feedback on the draft that directly affected the writing and experiment content and sparked future work ideas. 
This material is based upon work supported by the U.S. Department of Energy, Office of Science, under contract number DE-AC02-06CH11357. Wendi Li, Seongheon Park, Samuel Yeh and Sharon Li are supported in part by the AFOSR Young
Investigator Program under award number FA9550-23-1-0184, National Science Foundation under
awards IIS-2237037 and IIS-2331669, 
Schmidt Sciences Foundation, Open Philanthropy (now Coefficient Giving), Alfred P. Sloan Fellowship, and gifts
from Google and Amazon.
\end{ack}

{
\small
\bibliographystyle{unsrt}
\bibliography{main}
}

\newpage
\appendix
\begin{center}
    \LARGE \textbf{Appendix}
    \vspace{1em}
\end{center}
\noindent\rule{\linewidth}{0.4pt}
\vspace{0.5em}
\tableofcontents
\noindent\rule{\linewidth}{0.4pt}
\vspace{0.5em}
\addtocontents{toc}{\protect\setcounter{tocdepth}{2}}
\clearpage
\newpage
\section{Implementation of Progress Advantage} \label{sec:apdx:progadv}
\subsection{Policy Pairs in Progress Advantage} \label{sec:apdx:progadv:pair}
\begin{table}[ht!]
\caption{\textbf{Policy pair lineup used for progress advantage construction.} We consider the following five open-source model families, which offer the base or intermediate checkpoint of their final post-trained version.}
\label{tab:policypair}
\centering
\resizebox{1.0\textwidth}{!}{
\begin{tabular}{@{}llll@{}}
\toprule
Model Family & Behavior Policy $\tilde{\pi}^{*}$   & Reference Policy $\pi_{\rm ref}$       & HuggingFace Collection URL                                               \\ \midrule
Qwen3.5~\cite{qwenteam2026qwen35omni}      & Qwen3.5-9B          & Qwen3.5-9B-Base        & \url{https://huggingface.co/collections/Qwen/qwen35}    \\
Qwen3~\cite{yang2025qwen3}     & Qwen3-14B           & Qwen3-14B-Base         & \url{https://huggingface.co/collections/Qwen/qwen3}     \\
Qwen2.5~\cite{yang2024qwen2}    & Qwen2.5-7B-Instruct & Qwen2.5-7B             & \url{https://huggingface.co/collections/Qwen/qwen25}    \\
Gemma4~\cite{gemma42026modelcard}      & gemma-4-E4B-it      & gemma-4-E4B            & \url{https://huggingface.co/collections/google/gemma-4} \\
Olmo3~\cite{olmo2025olmo}    & Olmo-3-7B-Instruct  & Olmo-3-7B-Instruct-DPO & \url{https://huggingface.co/collections/allenai/olmo-3} \\ \bottomrule
\end{tabular}}
\end{table}
Progress advantage (Proposition~\ref{prop:prog_adv}) is built upon two policies: the RL-trained behavior policy and its reference policy used as a regularization pivot during RL training. Due to limited public resources in terms of the available policy pairs, we consider five representative model families in Table~\ref{tab:policypair} to evaluate the progress advantage in practice. It is common for the industry to just release the final post-trained model to the public while keeping their base and intermediate model checkpoints confidential, but we call on institutions and communities to pay attention and effort to realize a more transparent model development pipeline with fully opened intermediate artifacts, i.e., checkpoints and datasets, as well as step-by-step recipes as pushed by a few leaders~\cite{hall2025marin,olmo2025olmo,blakeman2025nemotron}.

\subsection{Progress $k$-Advantage} \label{sec:apdx:progadv:k}
The default implementation of progress advantage is just using the token probability from $\tilde{\pi}^{*}$ and $\pi_{\rm ref}$ of the realized token at each position in the trajectory. However, some recent works~\cite{tang2025few,shah2025comedy,qi2026rethinking} found that naive token-probability-based regularized RL training sometimes induces nuance instability in gradient estimation; at the same time, confidence-based LLM self-evaluation methods, such as Self-Certainty~\cite{kang2025scalable} and DeepConf~\cite{fu2026deep}, typically adopt probability smoothing over multiple tokens (top-$k$ for instance) to derive stable implementation of confidence rather than the pure (log) probability. Therefore, we additionally propose progress $k$-advantage, a top-$k$ smoothed probability variant of standard progress advantage formulated as below,
\begin{equation} \label{eq:apdx:prog-kadv}
    \text{Progress $k$-Advantage}:=\frac{1}{k}\big(\sum_{i\in \text{Top-}k}\log \tilde{\pi}^{*}(A[i]~|~s)-\sum_{i\in \text{Top-}k}\log \pi_{\rm ref}(A[i]~|~s)\big),
\end{equation} 
\noindent where $\pi(A[i]~|~s)$ denotes the $i$-th token probability value from the output action probability distribution $\pi(A~|~s)$ of the policy $\pi$. In Appendix~\ref{sec:apdx:exp-result}, we compare this top-$k$-smoothed log probability version with the vanilla version.

\section{Details on Experiment Setup} \label{sec:apdx:exp-setup}

\subsection{Baseline Methods} \label{sec:apdx:exp-setup:baseline}
\paragraph{Self-Certainty~\citep{kang2025scalable}} is a training-free, self-confidence-based scoring method proposed for best-of-$N$ selection in reasoning tasks. It evaluates trajectory quality using only the behavior policy's output token probability distribution.
To be specific, at each position $t$, it measures the KL divergence between the model's next-token distribution $\tilde{\pi}^*(\cdot \,|\, s_t)$ and the uniform distribution over the action space $\mathcal{A}$, capturing how peaked the policy is in its predictions.
Equivalently, up to an additive constant, it can be written as a cross-entropy with the uniform distribution.
The trajectory-level score for a trajectory $\tau$ is defined by averaging the per-token self-certainty across all $T$ positions, like below:
\begin{equation} \label{eq:selfcertaintyce}
    \text{Self-Certainty} (\tau) \;=\; -\,\frac{1}{T\,|\mathcal{A}|}\sum_{t=0}^{T-1}\sum_{a \in \mathcal{A}} \log\tilde{\pi}^*(a \,|\, s_t).
\end{equation}
We use a top-20 truncated distribution rather than the full vocabulary distribution for practicality. This provides a scalable signal for best-of-$N$ selection with no additional training, but in agentic settings, it rewards fluent continuations regardless of goal progress, sometimes resulting in poor scoring on correct but low-frequency tool-call strings.

\paragraph{DeepConf~\citep{fu2026deep}} is also a training-free, confidence-based scoring method designed and validated mainly on math or STEM domain reasoning tasks.
It refines confidence estimation by introducing local confidence measures over sliding token windows, motivated by the observation that global trace-level averages can dilute some important finer signals per step.
At each position $t$, the per-token confidence is defined as the average log-probability of the top-$k$ tokens under $\tilde{\pi}^*(\cdot \,|\, s_t)$ as
$C_t = \frac{1}{k}\sum_{a \in \mathrm{Top}\text{-}k(\tilde{\pi}^*(\cdot \,|\, s_t))} \log \tilde{\pi}^*(a \,|\, s_t).$
The group confidence is then defined over by sliding window $G_i=\{i-w,i-w+1,...,i\}$ with $w$ previous tokens, such as,  $C_{G_i}=\frac{1}{|G_i|}\sum_{t \in G_i} C_t$. 
While DeepConf authors adopted an overlapping sliding window to define group confidence, this is not valid for agent rollouts that mix the agent's own output tokens with the environment-side observation tokens. Therefore, we define the group confidence as the average of token confidence per agent's action step without overlapping neighbor steps. Finally, to define a trajectory-level score for the TTS and UQ scenarios, we adopt two instantiations of DeepConf as follows.

(1) \textit{DeepConf Tail} as the last step group confidence, capturing action quality at the termination phase:
\begin{equation}
    \text{DeepConf}_{\text{Tail}}(\tau) \;=\; C_{G_{\text{last}}}.
\end{equation}
(2) \textit{DeepConf B10} averages the bottom-10\% of group confidences across the trajectory, focusing on the least-confident segments. Letting $\mathcal{B}_{10}$ denote the index set of groups with the lowest 10\% of $\{C_{G_i}\}$, which is equal to the lowest 10\% per-step confidence set,
\begin{equation}
    \text{DeepConf}_{\text{B10}}(\tau) \;=\; \frac{1}{|\mathcal{B}_{10}|}\sum_{i \in \mathcal{B}_{10}} C_{G_i}.
\end{equation}
Both instantiations rely solely on the behavior policy probability $\tilde{\pi}^*$, sharing the same limitation as Self-Certainty in agentic settings: high local confidence from behavior policy alone may entangle generic linguistic priors with goal-directed progress.

\paragraph{LLM-as-a-Judge~\cite{zheng2023judging,chiang2023can,bubeck2023sparks}.} LLMs can be prompted as a judge to assign bounded scalar score assessments to outputs produced by other LLMs. This LLM-as-a-Judge paradigm has become a scalable substitute for human preference annotation in open-ended evaluation~\cite{liu2023geval,zheng2023judging,dubois2024lengthcontrolled} and is increasingly used as a feedback source in model-development pipelines, including reward modeling and self-alignment~\cite{lee2024rlaif,yuan2024self,son2024llmjudge}. More recently, judge models have also shown promise for evaluating goal-oriented, long-horizon agent trajectories, where the evaluator must reason over the full action-observation history rather than only the final response~\cite{zhuge2025agent,xi2025agentprm}. In our UQ experiment\footnote{Since the cost of API usage burdens a lot in the best-of-N sampling scenario, we confined these baselines to the UQ setup.}, we use Claude-Sonnet-4.6~\cite{anthropic2026sonnet46}, a strong and relatively cost-effective modern LLM judge, to predict whether an agent succeeds in achieving the task goal using the prompt depicted in Figure~\ref{fig:uq-judge-prompt}.

\paragraph{WildReward~\cite{peng2026wild}.}
We adopt \texttt{WildReward-8B}~\cite{peng2026wild} as an off-the-shelf pre-trained outcome reward model\footnote{\url{https://huggingface.co/THU-KEG/WildReward-8B}}, which is trained over massive user-chatbot interaction data, using it as a drop-in scorer across TTS and UQ for trajectory-level scoring. Given a trajectory input, it returns a single scalar reward estimate $\hat{r}:=1+\sum_j
\sigma(z_j)\in[1,5]$ via CORAL ordinal regression~\cite{cao2020rank} where $z_j$ for $j=1,2,3,4$ denotes individual logit values from 4-way threshold classification head. Figure~\ref{fig:wildreward-prompt} depicts the prompt used for all the downstream evaluation.

\paragraph{ThinkPRM~\cite{khalifa2025process}} is a PRM trained on mathematical reasoning data\footnote{\url{https://huggingface.co/launch/ThinkPRM-7B}}, that provides a step-by-step verification on the model-generation trajectory with per-step verdicts on correct-or-incorrect. We extract the per-step $P(\text{correct})$ from the vLLM inference logprobs and average across steps to derive trajectory-level reward estimate. We adapted the official prompt template tailored to the per-task setting: the template in Figure~\ref{fig:thinkprm-uqtts-prompt} for UQ and TTS and the template in Figure~\ref{fig:thinkprm-fa-prompt} for failure attribution.

\paragraph{AgentPRM~\cite{xi2025agentprm}.} As a task-specific training-based PRM baseline, we reproduce AgentPRM, which trains a reward head $h_{\phi}$ over the policy backbone to jointly predict each step's promise and progress under a combined regression objective.
We follow the original recipe of the authors with \texttt{Qwen2.5-7B-Instruct} as both the behavior policy and the PRM backbone: we first SFT the behavior policy with LoRA ($r=64$ and $\alpha=128$) on 300 AgentTraj-L expert trajectories drawn from the AgentGym WebShop train split\footnote{For the experiment in Tab.~\ref{tab:tts-training-based-rm}, we used AgentGym wrapper while used WebShop codebase for other TTS experiments.} (with the 100 test samples held out). Then, we continually LoRA fine-tune the backbone with a randomly initialized linear reward head, $h_{\phi}:\mathbb{R}^{3584}\rightarrow\mathbb{R}$, for 3 epochs at 1e-5 learning rate on 8000 on-policy trajectories sampled with temperature 1.0, with GAE parameter set $\lambda = 0.95$ and $\gamma = 1$.

\subsection{Application-Specific Details} \label{sec:apdx:exp-setup:detail}
\subsubsection{Test-time scaling (TTS)} \label{sec:apdx:exp-setup:detail:tts}
\paragraph{Problem definition.}
We consider best-of-N (parallel) sampling~\cite{lightman2023let,brown2024large} as a testbed for reward models on a test-time compute scaling application, where we sample multiple responses (with non-zero generation temperature parameter) from an LLM given the same task prompt in parallel and score each trajectory to select the one that results in the \textit{best} score among them for the actual evaluation. That is, given a trajectory $\tau$ starts from a prompt $s$ and a task-specific (typically binary) success-failure evaluator $y(\tau~|~s)\in\{0,1\}$, we measure average task success rate on a dataset $\mathcal{D}$ of task prompts as below,
\begin{align} \label{eq:tts-bon}
    \frac{1}{|\mathcal{D}|}\sum_{s\in\mathcal{D}}
    y( \tilde{\tau}~|~s)~~~&\text{for}~~~\tilde{\tau}=\arg\max_{\tau\in\mathcal{T}} r(s,\tau)~~~\text{with}~~~\mathcal{T}=\{M^{(i)}_{\pi}(\cdot|s,c)\}_{i=1}^{N},
\end{align}
\noindent where $M^{(i)}_{\pi}(\cdot|s,c)$ denotes the $i$-th trajectory generated by an LLM $M_{\pi}$ with a policy $\pi$ given a temperature parameter $c \geq 0.0$ and $r(\cdot)$ denotes a reward score computed over the trajectory $\tau$. The aim of this evaluation is to compare different reward models $r(\cdot)$ that yield the best success rate. 

\paragraph{Benchmark.} 
We adopt four different benchmarks that are specifically designed for evaluating LLMs' agentic capability: BFCLv4~\cite{patil2025berkeley}, WebShop~\cite{yao2022webshop}, AgentDojo~\cite{debenedetti2024agentdojo}, and $\tau^{2}$-bench Airline~\cite{yao2024tau,barres2025tau}. Below we elaborate on each benchmark, while deferring the description of the $\tau^2$-bench to the next subsection~\ref{sec:apdx:exp-setup:detail:uq}. Table~\ref{tab:tts-benchmarks} summarize the statistics
\begin{table}[th]
\vspace{-0.7em}
\caption{\textbf{Test-time scaling benchmarks.} For each benchmark, we sample $N=8$ rollouts per task at temperature $c$, then score every rollout with each reward scoring method and select the argmax. AgentDojo aggregates four suites; the per-suite task counts are listed in parentheses.}
\label{tab:tts-benchmarks}
\centering
\small
\begin{tabular}{llcc}
\toprule
Benchmark & Split & \# tasks & $c$ \\
\midrule
BFCLv4-MT~\cite{patil2025berkeley} & Multi-turn base split & 200 & 0.4 \\
WebShop~\cite{yao2022webshop} & First 100 tasks subset with \texttt{items\_shuffle\_1000.json} & 100 & 0.7 \\
AgentDojo~\cite{debenedetti2024agentdojo} & Workspace (40), Slack (21), Banking (16), and Travel (20) & 97  & 0.4 \\
$\tau^{2}$-bench~\cite{yao2024tau,barres2025tau} & Airline & 50  & 0.7 \\
\bottomrule
\end{tabular}
\vspace{-0.7em}
\end{table}

\textbf{BFCLv4} is a benchmark focusing on assessing an agent's function using capability. Here, the agent must emit a sequence of tool calls whose arguments and ordering exactly match a reference, with cross-turn argument propagation graded by a deterministic verifier. We use the \texttt{multi\_turn\_base} category (200 tasks) for the format-fidelity probing test while ignoring the harder long and missing-info categories.

\textbf{AgentDojo} is a benchmark of computer-use task suites with realistic mock APIs (reading email, posting to Slack, banking transactions, travel booking) where the agent must complete a free-form natural-language request by issuing tool calls. We run four suites, Workspace (40), Slack (21), Banking (16), Travel (20), totaling 97 tasks, where the success requires both the correct final state and a clean termination.

\textbf{WebShop} is a simulated e-commerce environment in which the agent navigates HTML-rendered product search and click pages to fulfill a shopping instruction conveyed through natural language. The reward is shaped in $[0,1]$ based on attribute overlap with the target product. We evaluate on the first $100$ tasks with 1000 products (\texttt{items\_shuffle\_1000.json} split) and 30 per-rollout interaction step limits, which is the standard WebShop split used in prior work.

\paragraph{Scenario-specific baseline.} We consider three baselines in this best-of-N setup: greedy decoding, mean-of-N, and pass@N. Since the default inference mode of TTS is generation with non-zero temperature, we consider a zero-temperature greedy decoding as a reference for each benchmark. This can be useful to check whether the non-zero temperature exploratory decoding is more helpful than a deterministic, exploitative decoding strategy on each task~\cite{song2025good}. Meanwhile, mean-of-N is literally the average success rate of N candidate trajectories, representing the average competency of exploratory decoding. Finally, we also report pass@N as an oracle, upper bound score achievable by any selection method, indicating that the rate of at least one trajectory among N candidates succeeds.

\subsubsection{Uncertainty quantification (UQ)} \label{sec:apdx:exp-setup:detail:uq}
\paragraph{Problem definition.} 
UQ is a representative application in the deployment-phase monitoring of LLM agents~\cite{hu2024uncertainty,zhang2026auq,oh2026uncertainty}, which is becoming a central interest of research to realize trustworthy AI in the wild.
Similar to the TTS problem setup, given the task prompt $s$, a trajectory-level reward score $r(s,\tau)$ predicts whether the trajectory $\tau$ will end in success as below,
$\mathbb{I}\big( r(s,\tau) > H \big)$ given a classification threshold $H\geq 0$. By following the evaluation standard in LLM UQ research~\cite{malinin2021uncertainty,kuhn2023semantic}, we report the area under the receiver operating characteristic curve (AUROC) which serves as a threshold-independent, balanced measure of binary prediction quality.

\paragraph{Benchmark.} 
We adopt $\tau^{2}$-bench~\cite{yao2024tau,barres2025tau} Airline and Retail domains as our main testbed for UQ (Telecom domain was excluded from consideration since the modern agents' performances are saturated on that domain). Each domain has a detailed policy prompt that lists domain-specific constraints and ground rules, and the agent LLM equips that policy as a system prompt to define its domain-specific background context. In the meantime, the benchmark also hosts another LLM as a user simulator (we adopt Kimi-K2.5~\cite{kimi2026} given its outstanding cost-effectiveness), who has a brief persona with synthetic personal information and also has a seed prompt to define their goal per task. Under this scaffolding, the agent spans a long-horizon trajectory by interacting with the user and pre-defined tools to achieve the complex goal\footnote{Some example trajectories can be explored in this \href{https://taubench.com}{official online leaderboard}.}. Different to the TTS setup, we here fix the temperature parameters for both agent and user LLMs to zero to make the trajectory generation deterministic, i.e., greedy decoding mode, and quantify uncertainty once over these deterministic generation passes to prevent any unexpected confounding effects. The greedy decoding performance per-model in these two domains is provided in Table~\ref{tab:taugreedy}, where we found that the latest two model backbones \texttt{Gemma4-4B} and \texttt{Qwen3.5-9B} show balanced moderate performance on both domains.
\begin{table}[!ht]
    \centering
    \vspace{-1em}
    \caption{\textbf{$\tau^{2}$-bench Airline and Retail greedy decoding success rate}. N denotes \# of samples.}
    \label{tab:taugreedy}
    \small
    \begin{tabular}{l|cccc}
    \toprule
        Domain & \texttt{Gemma4-4B} & \texttt{Qwen3.5-9B} & \texttt{Qwen3-14B} & \texttt{Olmo3-7B} \\ \midrule
        Airline ($N=50$) & 34.0  & 60.0 & 12.0 & 30.8 \\ 
        Retail ($N=114$) & 45.6 & 64.9 & 50.0 & 16.7 \\ \bottomrule
    \end{tabular} 
    \vspace{-1.2em}
\end{table}

\paragraph{Scenario-specific baseline.}
We consider the LLM-as-a-Judge baseline, which prompts a powerful LLM to predict the binary outcome of success or failure given a realized agent trajectory. We adopt Claude-Sonnet-4.6 as our LLM judge, given its remarkable performance while requiring way cheaper API cost than its competitors. See the section~\ref{sec:apdx:exp-setup:baseline} for more description.

\subsubsection{Failure attribution (FA)} \label{sec:apdx:exp-setup:detail:fa}
\paragraph{Problem definition.}
We pose an agentic system that consists of multiple LLM agents, each equipped with tool-calling and inter-agent communication capabilities, collectively designed to solve a complex, long-horizon task. A trajectory $\tau\in\mathcal{T}$ generated from this agentic system is simply defined as $\tau=(s,a_1,a_2,...,a_{T(\tau)})$ with the user's initial prompt $s$ and sequence of agents' actions $a_t$ with varying per-trajectory length $T(\tau)$.
Given a failure trajectory $\tau$ and a ground truth error step annotator $y(\tau)=\{t~:~a_t~\text{is an incorrect action contributing to the system failure.}\}$, the failure attribution task aims to predict a \textit{decisive error step}, $t_{\rm err}:=\min_{t} y(\tau) $ that becomes \textbf{the earliest critical error causing the failure of the system}.
For all methods, we predict the decisive error step as the index of the steepest cummulative reward drop step $\hat{t}_{\rm err}:=\arg\min_t \sum_{i=0}^{t} r(s_i,a_i) - \sum_{i=0}^{t-1} r(s_{i-1},a_{i-1})$ (which is equal to the minimum step reward index, $\arg\min_t r(s_t,a_t)$). Then, we measure the step-level prediction accuracy $\mathbb{I}(t_{\rm err}=\hat{t}_{\rm err})$ over the dataset.

\paragraph{Benchmark.}
We adopt Who\&When~\cite{zhang25which} benchmark where the seed tasks are drawn from two sources: (1) GAIA~\cite{mialon2024gaia}, which contains queries requiring information processing from multiple modalities, such as PDFs, spreadsheets, images, videos, and audio, as well as web browsing and coding capabilities; and (2) AssistantBench~\cite{yoran-etal-2024-assistantbench}, which asks agents to play with multiple websites across various topics in geography, visual arts, biology, and so on. From the seed task prompt, trajectories were then generated by two agentic systems, CaptainAgent~\cite{song2025adaptive} and Magnetic-One~\cite{fourney2024magenticone}, using GPT-4o as a behavior policy. The dataset consists entirely of failure trajectories, comprising 184 trajectories in total, each annotated with a decisive error step label. 

\paragraph{Scenario-specific baseline.}
We consider AgenTracer~\cite{zhang2025agentracer} as a training-based method specifically trained on failure attribution tasks. We report the performance of Zhang et al.~\cite{zhang2025agentracer} which conducts GRPO training from \texttt{Qwen3-8B} base model on 2.5K failure trajectories with step-level annotations. Given an arbitrary-length trajectory, AgenTracer predicts a decisive error step with a reasoning trace.

\subsection{Inference Setup and Configuration} \label{sec:apdx:exp-setup:configuration}
The aforementioned three classes of experiments, TTS, UQ, and FA, share the following common configuration unless noted. All LLMs are loaded in \texttt{bfloat16} on a single GPU with model-family-tailored tokenizers and chat templates (e.g. \texttt{qwen3\_coder} \& \texttt{qwen3} reasoning parser for Qwen3.5; \texttt{gemma4} for Gemma4; \texttt{olmo3} for Olmo3).

In \textbf{TTS} setup, we use vLLM~\cite{kwon2023efficient} to generate trajectory with \texttt{max\_model\_len}=32768 and set \texttt{enable\_thinking}=\texttt{false} for all model backbones (except the Olmo3) that support inference mode selection. Greedy decoding baselines adopt zero generation temperature and \texttt{top\_p}=1.0; the best-of-$N$ stochastic trajectory generation set \texttt{top\_p}=0.95, \texttt{max\_new\_tokens}=1024, and temperature as 0.7 on $\tau^2$-bench and WebShop, whereas BFCL and AgentDojo adopt 0.4 temperature. The user simulator in $\tau^2$-bench is held at zero-temperature, so the inter-trial volatility comes from the agent only.

In \textbf{UQ} and \textbf{FA}, we score pre-generated greedy decoding trajectories by re-tokenizing the full conversation log and running a single forward pass per trajectory through the standard HuggingFace transformers library as the engine. We left-truncate to a context window of \texttt{max\_length}=16384 tokens and read the per-position log-softmax for confidence-based and our method.

\section{Additional Results} \label{sec:apdx:exp-result}
\paragraph{Is top-$k$ averaging over log token probability helpful?} We first compare our default progress advantage, $\log\tilde{\pi}^{*}(a|s)-\log\pi_{\rm ref}(a|s)$, with a top-$k$ smoothed version, progress $k$-advantage. Table~\ref{tab:tts-bon-main-apdx} and Table~\ref{tab:uq-on-main-apdx} present the results in TTS and UQ, respectively, and Figure~\ref{fig:fa-main-apdx} provides results on the FA.

\begin{minipage}{\textwidth}
\begin{minipage}[ht]{0.48\textwidth}
\captionof{table}{\textbf{TTS through best-of-8 sampling}. We compare reward scoring methods across two LLM backbones, reporting the success rate (\%) of the selected trajectory averaged across four datasets (BFCLv4, WebShop, AgentDojo, and $\tau^2$-Airline).}
\label{tab:tts-bon-main-apdx}
\centering
\resizebox{1.0\textwidth}{!}{%
\begin{tabular}{lcc}
\toprule
Scoring Method     & \texttt{Gemma4-4B} Avg.  & \texttt{Qwen3.5-9B} Avg. \\ \midrule
\rowcolor{mygray}
Pass@N (oracle)    &  45.4 & 67.5 \\
\rowcolor{mygray}
Greedy Decoding     & 33.4 & 54.6 \\
Mean-of-N &  33.1 & 54.7 \\ \midrule
WildReward-8B~\cite{peng2026wild} &  33.1 & 54.8 \\
ThinkPRM-7B~\cite{khalifa2025process}  & 30.9 & 52.2 \\
ThinkPRM-14B~\cite{khalifa2025process} & 33.6 & 54.9 \\ \midrule
Self-Certainty~\cite{kang2025scalable}  & 29.0 & 51.5 \\
DeepConf Tail~\cite{fu2026deep}  & 31.4 & 53.2 \\
DeepConf B10~\cite{fu2026deep} & 27.4 & 55.8 \\ \midrule
\rowcolor{lightpurple}
Progress $k$-Advantage &  \underline{34.1} & \underline{58.1} \\ 
\rowcolor{lightpurple}
Progress Advantage & \textbf{38.8} & \textbf{62.1} \\
\bottomrule
\end{tabular}
}
\end{minipage}
\hfill
\begin{minipage}[ht]{0.48\textwidth}
\captionof{table}{\textbf{UQ for trajectory monitoring}. We compare scoring methods across four LLM backbones on $\tau^2$-bench Airline and Retail to predict an agent's success on each model's greedy-decoding trajectory. AUROC averaged across the four backbones (\texttt{Gemma4-4B}, \texttt{Qwen3.5-9B}, \texttt{Qwen3-14B}, and \texttt{Olmo3-7B}).
}
\label{tab:uq-on-main-apdx}
\centering
\resizebox{\textwidth}{!}{%
\begin{tabular}{lcc}
\toprule
Scoring Method     & $\tau^{2}$-Airline Avg. & $\tau^{2}$-Retail Avg. \\ \midrule
\rowcolor{mygray}{\claudelogo} Sonnet-4.6~\cite{anthropic2026sonnet46}             & 0.644 & 0.818 \\
WildReward-8B~\cite{peng2026wild}              & 0.420 & 0.596 \\
ThinkPRM-7B~\cite{khalifa2025process}                & 0.457 & 0.558 \\
ThinkPRM-14B~\cite{khalifa2025process}               & 0.520 & 0.591 \\ \midrule
Self-Certainty~\cite{kang2025scalable}            & 0.658 & 0.441 \\
DeepConf Tail~\cite{fu2026deep}             & 0.581 & 0.429 \\
DeepConf B10~\cite{fu2026deep}              & 0.669 & 0.446 \\ \midrule
\rowcolor{lightpurple}
Progress $k$-Advantage     & \underline{0.732} & \underline{0.646} \\ 
\rowcolor{lightpurple}
Progress Advantage        & \textbf{0.781} & \textbf{0.671} \\
\bottomrule
\end{tabular}
}
\end{minipage}
\end{minipage}

We observe that both the progress advantage and $k$-smoothing variant greatly outperform the baseline methods on both TTS and UQ scenarios by consistently achieving the best and the second-best performances. In these two scenarios, the default progress advantage beats the $k$-smoothing version, showing its effectiveness grounded in theoretical derivation. Meanwhile, progress $k$-advantage exceeds the default progress advantage on some of the FA setups with \texttt{Gemma4-4B} backbone, wherein the precise per-step credit assignment is crucial, suggesting the smoothed representation of probability may sometimes be better than the exact one to model step-level progress. This result implies that one can carefully choose the representation of token probability (pure token probability, $k$-averaged, etc.) for the intended use and characteristics in a downstream task and data.
\begin{figure}[t!]
    \centering
    \includegraphics[width=0.5\linewidth]{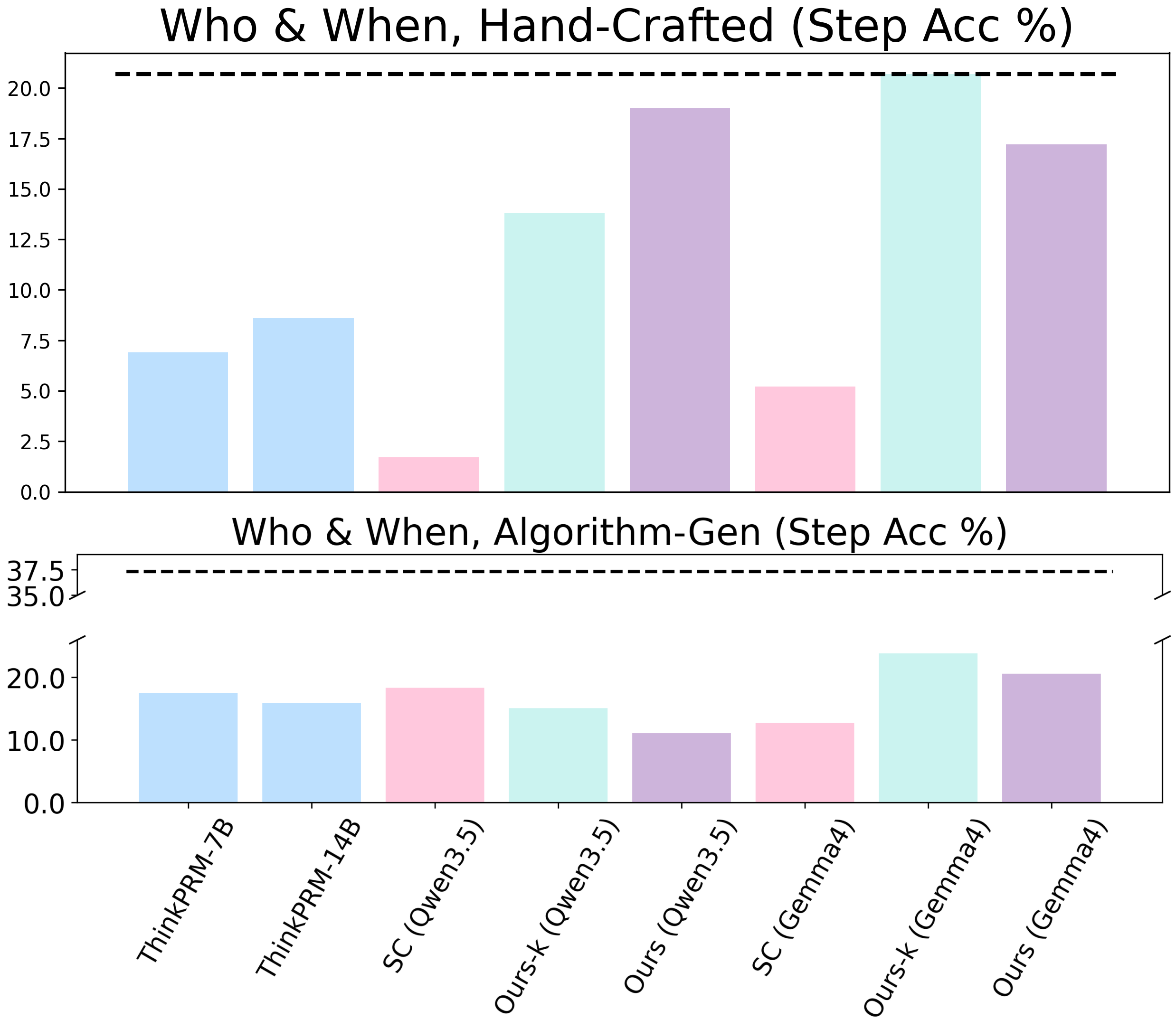}
    \vspace{-0.4em}
    \caption{\textbf{Who \& When step-level accuracy}. 
    We predict when the agent system makes a decisive error. SC denotes Self-Certainty~\cite{kang2025scalable}, Ours-k denotes the progress $k$-advantage, and the dashed line denotes AgenTracer~\cite{zhang2025agentracer} specifically trained on this failure attribution task through RL-training.
    }
    \vspace{-0.7em}
    \label{fig:fa-main-apdx}
\end{figure}

\paragraph{Comparison between progress advantage and its ingredients.} Since our progress advantage is constructed with the log probability of the behavior policy $\log\tilde{\pi}^{*}(a|s)$ and the reference policy $\log\pi_{\rm ref}(a|s)$, the natural question is whether we should blend the two log probabilities rather than using one of them for simplicity and efficiency. In Figure~\ref{fig:qual_anal}, we showed how the pure policy log probability $\log\tilde{\pi}^{*}(a|s)$ results in poor scoring, while progress advantage derives desirable scoring through a qualitative analysis. In this paragraph, we further provide a summary of quantitative results on the comparison between the progress advantage $\log\frac{\tilde{\pi}^{*}(a|s)}{\pi_{\rm ref}(a|s)}$, behavior policy log probability $\log\tilde{\pi}^{*}(a|s)$, and reference policy log probability $\log\pi_{\rm ref}(a|s)$ across eight UQ scenarios. Table~\ref{tab:component_ablation} shows the average ranking and corresponding average AUROC. We see that the progress advantage outperforms its two ingredients by large margins.
\begin{table}[hb!]
\caption{\textbf{Comparison between the progress advantage and its ingredients.} On the eight UQ scenarios (two domains and four model backbones), we compute the average ranking and AUROC of each reward signal across all possible combinations of token and step aggregations (25 in total) per method. We see that progress advantage consistently outperforms the individual ingredients, such as the log probability of the behavior policy and reference policy.}
\label{tab:component_ablation}
\centering
%\resizebox{\linewidth}{!}{
\begin{tabular}{l|cc}
\toprule
Scoring Method & Avg. Rank by Best AUROC & Avg. Best AUROC \\ \midrule
\rowcolor{lightpurple}
$\log\frac{\tilde{\pi}^{*}(a|s)}{\pi_{\rm ref}(a|s)}$ & \meanstd{\textbf{1.44}}{0.62} & \meanstd{\textbf{0.732}}{0.059} \\
$\log\tilde{\pi}^{*}(a|s)$   & \meanstd{2.25}{0.89} & \meanstd{0.679}{0.044} \\
$\log\pi_{\rm ref}(a|s)$ & \meanstd{2.31}{0.70} & \meanstd{0.695}{0.043} \\ \bottomrule
\end{tabular}
%}
\end{table}

\begin{figure*}[thb]
    \centering
    \makebox[\textwidth][c]{
    \includegraphics[width=1.025\linewidth]{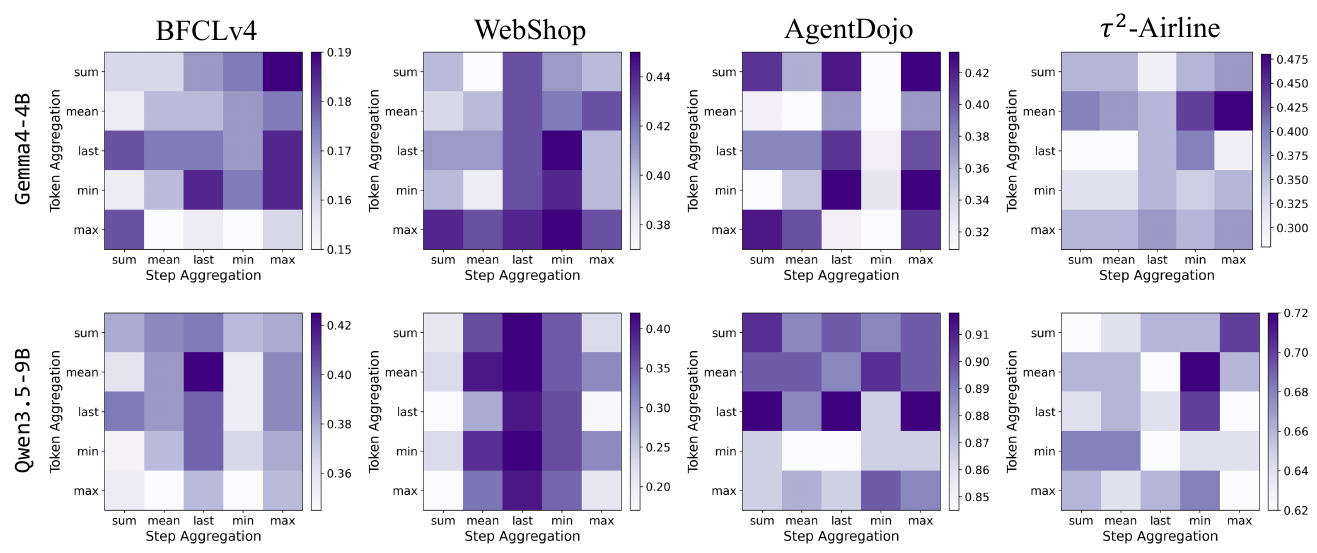}}
    \caption{\textbf{Varying combinations of token and step aggregation strategy for progress advantage in best-of-N.} We sweep 25 combinations of token-wise and step-wise aggregation of progress advantage over four datasets and two model backbones in the best-of-8 scenario.}
    \label{fig:agg_heatmap_full_tts}
    \vspace{-1em}
\end{figure*}
\paragraph{Extended results on advantage aggregation strategies.} As mentioned in Section~\ref{sec:method:practical} and Section~\ref{sec:exp:anal}, aggregation of the advantages across tokens and steps significantly affects the downstream utility of progress advantage. For the best-of-N scenarios (Figure~\ref{fig:agg_heatmap_full_tts}), the best working aggregation strategy is not universal across datasets or models. Some datasets, such as WebShop and AgentDojo, show somewhat robust results across different aggregation methods. For example, choosing \textsc{max} operator as a token aggregation and \textsc{last} or \textsc{min} as step aggregation yields promising results for \texttt{Gemma4-4B} on WebShop, whereas \textsc{mean} or \textsc{min} as token aggregation and \textsc{last} as step aggregation become the winning tickets for \texttt{Qwen3.5-9B}; other datasets exhibit sensitivity depending on the aggregation where (\textsc{mean}, \textsc{last}) as the (token, step) aggregation pair stands for the sole winner in BFCLv4 dataset with \texttt{Qwen3.5-9B}.

In the meantime, the trend in UQ (Figure~\ref{fig:agg_heatmap_full_uq}) is better interpretable, where we see (\textsc{max}, \textsc{mean}) combination is a winning strategy for Airline, whereas (\textsc{min}, \textsc{last}) combination is a winner for Retail. We can conclude that extreme tokens that produce maximum or minimum progress advantage are informative to define a per-step signal in UQ. Meanwhile, the effectiveness of step-wise aggregation reflects the domain structures: most of \textbf{tasks in the retail domain} (\textit{return}, \textit{exchanges}, \textit{order modification}) have some clear terminating action, and a single tool call can commit the outcome sometimes---incentives \textsc{last} as a suitable operation. Meanwhile, most of \textbf{tasks in the airline domain} (\textit{cancellations}, \textit{baggage policy}, \textit{membership status checks}) are kind of policy negotiations where the agent and user talk through the decision over multiple steps, and the success/failure is a slow event distributed across the whole conversation, making \textsc{mean} as a go-to operation.
\begin{figure*}[htb]
    \centering
    \makebox[\textwidth][c]{
    \includegraphics[width=1.025\linewidth]{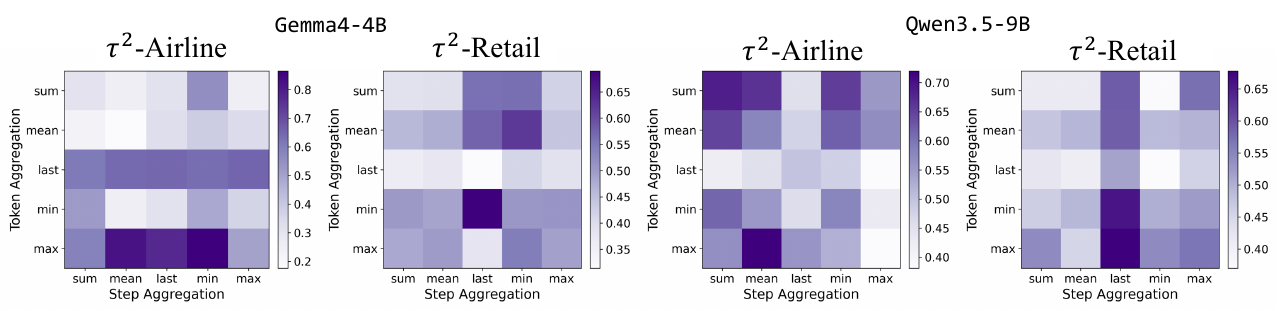}}
    \caption{\textbf{Varying combinations of token and step aggregation strategy for progress advantage in UQ.} We sweep 25 combinations of token-wise and step-wise aggregation of progress advantage over two domains in $\tau^2$-bench and two model backbones in the uncertainty quantification scenario.}
    \label{fig:agg_heatmap_full_uq}
\end{figure*}

\paragraph{Visualization of progress advantage evolution.} We have observed promising results of progress advantage in the UQ setup so far. To dive deeper into this success, we analyze how the progress advantage actually evolves step-by-step across the whole trajectory. In Figure~\ref{fig:advantage_evo}, we visualize group average per-step progress advantage for success and failure trajectory groups, where we normalize the step index to the [0,1] range since each trajectory has a different length. In the Airline domain, we see that progress advantage clearly separates the two groups from the very beginning. In the Retail domain, on the other hand, it shows a tied trend most of the time, but gives a higher advantage to the success group at the very end, implying that while the progress advantage shows its effectiveness broadly, the way it works can vary across domains.
\begin{figure}
    \centering
    \includegraphics[width=\linewidth]{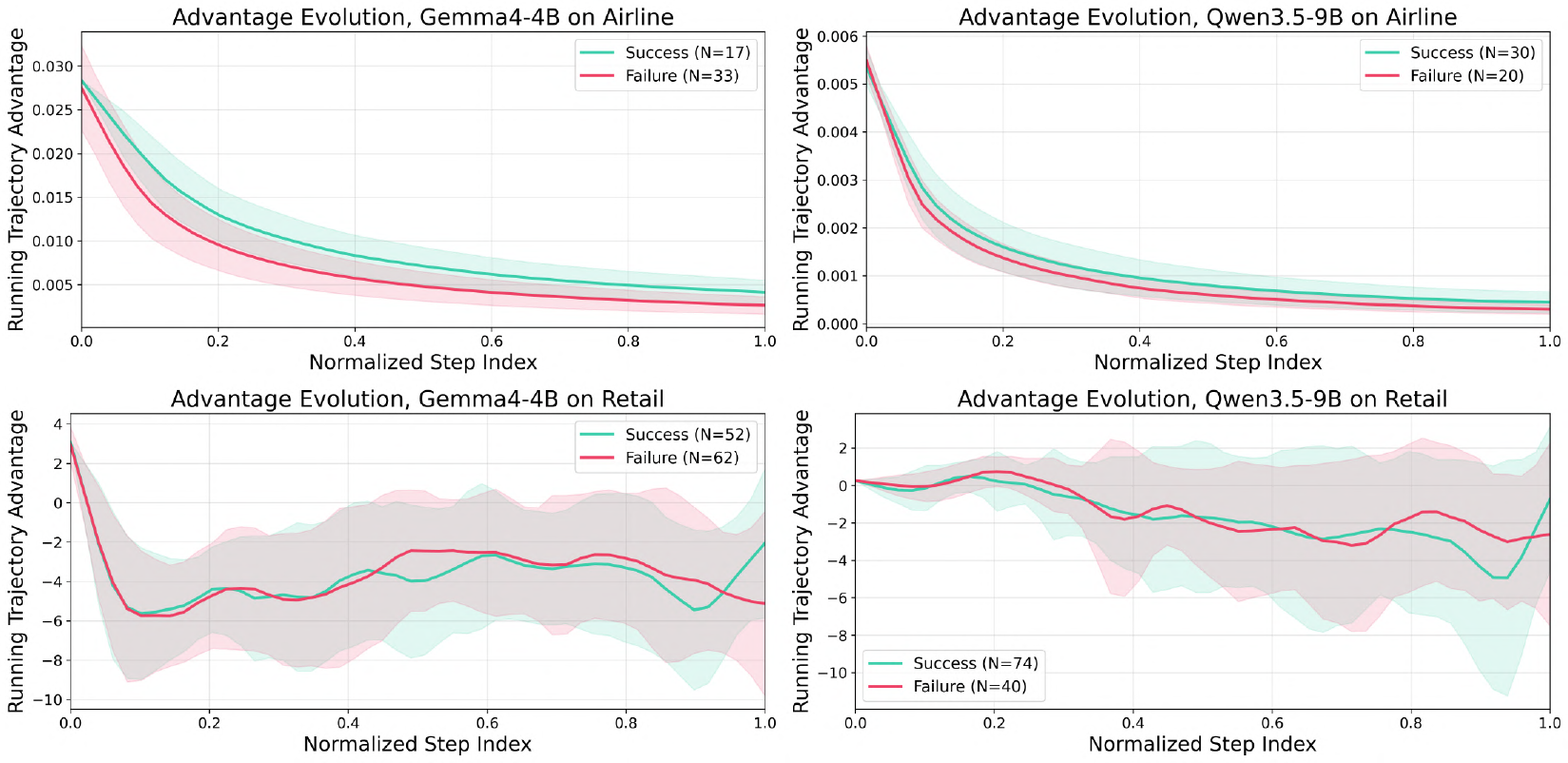}
    \caption{\textbf{Progress advantage evolution across trajectory}. We visualize group average per-step progress advantage over the $\tau^{2}$-bench greedy decoding trajectories generated by \texttt{Gemma4-4B} and \texttt{Qwen3.5-9B} where we apply \textsc{max} and \textsc{min} aggregation across tokens within each step for Airline and Retail domains, respectively, and apply \textsc{mean} and \textsc{last} aggregation across steps for Airline and Retail domains to get the running advantage. The group is defined by binary success and failure outcomes. The shade denotes one standard deviation across within-group per-step progress advantage.}
    \label{fig:advantage_evo}
\end{figure}

\begin{table}[t!]
\caption{\textbf{Test-time scaling through best-of-8 sampling on WebShop.} Given eight sample trajectories from \texttt{Qwen2.5-7B-Instruct} as a behavior policy with temperature 0.7, we compare training-based reward models as well as training-free confidence-based methods to the default progress advantage and its GRPO task-specific fine-tuned variant.}
\label{tab:tts-training-based-rm}
\centering
\begin{tabular}{lcc}
\toprule
Method                      & Training & Success Rate \\ \midrule
\rowcolor{mygray}
Pass\@N (oracle)            & \xmark        & 45.0          \\
\rowcolor{mygray}
Greedy Decoding             & \xmark        & 30.0         \\ 
Mean-of-N                   & \xmark        & 31.0         \\ \midrule
WildReward-8B~\cite{peng2026wild} & \cmark        & 32.0          \\
ThinkPRM-7B~\cite{khalifa2025process} & \cmark        & 32.0          \\
ThinkPRM-14B~\cite{khalifa2025process} & \cmark        & \underline{35.0}           \\
AgentPRM-7B~\cite{xi2025agentprm} & \cmark        & 33.0         \\ \midrule
Self-Certainty~\cite{kang2025scalable} & \xmark        & 30.0          \\
DeepConf Tail~\cite{fu2026deep} & \xmark        & 29.0    \\
DeepConf B10~\cite{fu2026deep} & \xmark        & 27.0         \\ \midrule
\rowcolor{lightpurple}
Progress Advantage          & \xmark        & \underline{35.0}           \\
\rowcolor{lightpurple}
Progress Advantage w/ GRPO  & \cmark        & \textbf{38.0}        \\ \bottomrule
\end{tabular}
\end{table}
\paragraph{Comparison with AgentPRM.} Due to the resource constraint, we mainly considered WildReward~\cite{peng2026wild} and ThinkPRM~\cite{khalifa2025process} in our main paper as training-based PRM baselines, which are trained on general multi-turn interaction or reasoning datasets rather than the actual downstream task datasets. In this paragraph, we provide results with an additional PRM baseline, AgentPRM~\cite{xi2025agentprm}, \textbf{which is directly trained on a specific downstream task}, e.g., WebShop, with a value head to predict the Q-value per step. Specifically, we follow the recipe in AgentPRM by first performing a small-scale SFT from \texttt{Qwen2.5-7B-Instruct} and then going through the actual reward modeling with step-wise advantage estimations.

In Table~\ref{tab:tts-training-based-rm}, although AgentPRM beats the comparable-size general pre-trained reward models, WildReward-8B and ThinkPRM-7B, as well as confidence-based baselines, it largely underperforms our progress advantage that was not trained on this specific WebShop dataset. This implies a substantial challenge to building reliable PRMs on agentic tasks, where we offer a new angle on this.

\paragraph{Can progress advantage take benefit of task-specific RL training?} In the main body of the paper, our core statement was that the progress advantage automatically emerges after general post-training, and it is a sufficiently useful signal to guide or monitor agentic inference. Meanwhile, one may wonder if we could further make the signal sharper and tailored to a specific downstream task by further RL fine-tuning the behavior policy on a task of interest. In Table~\ref{tab:tts-training-based-rm}, we present the results on this by training \texttt{Qwen2.5-7B-Instruct} with GRPO~\cite{shao2024deepseekmath} on the WebShop dataset under the hyperparameter specification in Table~\ref{tab:grpohyp}. We see that progress advantage without any task-specific training already achieves the best performance, rivaling the ThinkPRM-14B, which doubles model size compared to the behavior or reference policy; Besides, constructing the progress advantage with $\log\pi_{\rm GRPO}(a|s)-\log\pi_{\texttt{Qwen2.5-7B-Instruct}}(a|s)$ further pushes the success rate about 8.5\% by sharpening the reward signal to downstream tasks.
\begin{table}[h]
\centering
\small
\caption{\textbf{Hyperparameter configuration for GRPO training on WebShop}.}
\label{tab:grpohyp}
\begin{tabular}{lc}
\toprule
Hyperparameter & Value \\
\midrule
Max prompt length & 4096 \\
Max response length & 512 \\
Max environment steps & 10 \\
Learning rate & $2 \times 10^{-6}$ \\
Success reward & 10 \\
Failure reward & 0 \\
Invalid action penalty & -0.1 \\
Group size & 8 \\
Rollout temperature & 1.0 \\
Validation temperature & 0.4 \\
Mini-batch size & 64 \\
KL coefficient & 0.01 \\
\bottomrule
\end{tabular}
\end{table}

\section{Derivation of Implicit Rewards Under Stochastic MDP}\label{sec:apdx:derivation-reward}
The KL-constrained reward maximization problem is formulated as follows,
\begin{align} \label{eq:apdx:kl-constrained-rl-obj}
\max_{\pi_{\theta}} J(\pi_{\theta})
=\max_{\pi_{\theta}}\mathbb{E}_{a_{t}\sim\pi_{\theta}(\cdot|s_t)}\big[\sum_t r(s_t,a_t)
-\beta  \log \frac{\pi_{\theta}(a_t|s_t)}{\pi_{\rm ref}(a_t|s_t)} \big| s_0 \sim \rho \big],
\end{align}
\noindent where $\beta>0$ is the regularization coefficient and $\pi_{\rm ref}$ denotes a reference policy, commonly built with pre-trained or SFT checkpoints. This objective can be equally expressed as the following maximum entropy RL form~\cite{ziebart2008maximum,haarnoja17a} with entropy $H(\cdot)$,
\begin{align} \label{eq:max-ent-rl-obj}
\max_{\pi_{\theta}} J(\pi_{\theta})
=\max_{\pi_{\theta}}\mathbb{E}_{a_{t}\sim\pi_{\theta}(\cdot|s_t)}\big[\sum_t r(s_t,a_t)
+\beta  \log {\pi_{\rm ref}(a_t|s_t)} + \beta H\big(\pi_{\theta}(\cdot|s_t)\big) \big| s_0 \sim \rho \big].
\end{align}
This optimization problem gives us a known solution~\cite{ziebart2010modeling,rafailov2024from}, $\pi^{*}(a_t|s_t)=\exp\big( \frac{Q^{*}(s_t,a_t)-V^{*}(s_t)}{\beta} \big)$, corresponding optimal value function $V^{*}(s_t)=\beta\log\sum_{a}\exp\big( Q^{*}(s_t,a)/\beta \big)$, and also the corresponding optimal action-value function given the Bellman optimality equation,
{
\begin{align} \label{eq:apdx:bellman}
Q^{*}(s_t,a_t)=\begin{cases}
r(s_t,a_t)+\beta \log\pi_{\rm ref}(a_t|s_t)+\mathbb{E}_{s_{t+1}\sim f}[V^{*}(s_{t+1})], & \text{if}~s_{t+1}~\text{is not terminal} \\
r(s_t,a_t)+\beta \log\pi_{\rm ref}(a_t|s_t), & \text{otherwise.}
\end{cases}
\end{align}
}

Now we re-express the Eq~\ref{eq:apdx:bellman} as a reward-centric form and sum it across the trajectory up to $T-1$ position as below,
{\small
\begin{align} \label{eq:apdx:reward-derivation}
    \sum_{t=0}^{T-1}r(s_t,a_t)&=\sum_{t=0}^{T-1} \big( Q^{*}(s_t,a_t)-\beta\log\pi_{\rm ref}(a_t|s_t)-\mathbb{E}_{s_{t+1}}[V^{*}(s_{t+1})] \big) \nonumber\\
    &=\sum_{t=0}^{T-1} \big( \big[ \beta\log\pi^{*}(a_t|s_t)+V^{*}(s_t) \big]- \beta\log\pi_{\rm ref}(a_t|s_t)-\mathbb{E}_{s_{t+1}}[V^{*}(s_{t+1})] \big) \nonumber\\
    &=\sum_{t=0}^{T-1} \big( \beta\log\frac{\pi^{*}(a_t|s_t)}{\pi_{\rm ref}(a_t|s_t)} +V^{*}(s_t)-\mathbb{E}_{s_{t+1}}[V^{*}(s_{t+1})]\big).
\end{align}}
Eq.~\ref{eq:apdx:reward-derivation} indicates that we can no longer cancel out the intermediate value terms through the telescoping sum under stochastic MDP, and thus can not represent the exact reward solely with the known policy terms. This motivates us to explore an alternative derivation, progress advantage in Proposition~\ref{prop:prog_adv}.

\section{Missing Proof}\label{sec:apdx:derivation}
\subsection{Derivation of Progress Advantage}\label{sec:apdx:derivation-advantage}

In this section, we provide a full derivation from the optimal policy, the optimal state value function, the optimal action value function, and finally, the optimal advantage as an implicit process reward. 

\begin{proposition}[Restatement of Proposition~\ref{prop:prog_adv}] \label{prop:apdx:prog_adv}
    Let $\tilde\pi^{*}$ be an optimal policy under the KL-regularized RL objective (Eq.~\ref{eq:kl-constrained-rl-obj}) with $\beta>0$, shaped with the reference policy $\pi_{\rm ref}$ where $\pi_{\rm ref}(a|s)>0$ for any $a\in\mathcal{A}$ and $s\in\mathcal{S}$. 
    Then, the optimal advantage function is exactly recovered by the log probability ratio between $\tilde{\pi}^{*}$ and $\pi_{\rm ref}$ for any state and action:
    \begin{equation} \label{eq:apdx:prog_adv}
        \tilde{A}^{*}(s,a)=\tilde{Q}^{*}(s,a)-\tilde{V}^{*}(s)=\beta\log\frac{\tilde{\pi}^{*}(a|s)}{\pi_{\rm ref}(a|s)},~~~~\forall s \in \mathcal{S}, a \in \mathcal{A}.
    \end{equation}
\end{proposition}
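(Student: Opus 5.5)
The plan is to derive the closed form of $\tilde\pi^{*}$ from a per-state variational problem induced by the soft Bellman equation. The log-ratio identity then follows by taking logarithms. The key point is to define $\tilde{Q}^{*}$ so that all stochasticity of $f$ sits inside $\tilde Q^{*}$ as an expectation over next states. After that, nothing about the transition model needs to cancel.

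First, set up the soft value functions for the objective in Eq.~\ref{eq:kl-constrained-rl-obj}. Define $\tilde V^{*}(s)$ as the optimal expected regularized return from $s$. Define $\tilde Q^{*}(s,a)$ as the return from taking $a$ at $s$ and acting optimally afterwards. With the convention that the reference log-probability is folded into the reward, as in Eq.~\ref{eq:bellman}, this gives
\begin{equation*}
\tilde Q^{*}(s,a)=r(s,a)+\beta\log\pi_{\rm ref}(a|s)+\mathbb{E}_{s'\sim f(\cdot|s,a)}[\tilde V^{*}(s')],
\end{equation*}
with the expectation dropped at terminal states. An equivalent route keeps $r$ and writes the KL term explicitly. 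By dynamic programming over the finite horizon, optimality at $s$ reduces to the per-state problem
\begin{equation*}
\tilde V^{*}(s)=\max_{\pi(\cdot|s)\in\Delta(\mathcal A)}\sum_a \pi(a|s)\bigl[\tilde Q^{*}(s,a)-\beta\log\pi(a|s)\bigr].
\end{equation*}
This uses the entropy form of Eq.~\ref{eq:max-ent-rl-obj}. The point to stress is that $\tilde Q^{*}(s,a)$ depends on $\pi(\cdot|s)$ only through the fixed future optimal values. The per-state problem is therefore a concave maximization over the simplex, whatever the form of $f$.

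Second, solve this per-state problem with a Lagrange multiplier $\lambda$ for $\sum_a\pi(a|s)=1$. Stationarity gives $\tilde Q^{*}(s,a)-\beta\log\pi(a|s)-\beta-\lambda=0$, so $\pi(a|s)\propto\exp(\tilde Q^{*}(s,a)/\beta)$. Normalizing shows the optimal value is $\tilde V^{*}(s)=\beta\log\sum_a\exp(\tilde Q^{*}(s,a)/\beta)$, and hence
\begin{equation*}
\tilde\pi^{*}(a|s)=\exp\bigl((\tilde Q^{*}(s,a)-\tilde V^{*}(s))/\beta\bigr).
\end{equation*}
Strict concavity of entropy gives uniqueness and the interior, full-support solution. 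To obtain exactly the ratio stated in the proposition, I will instead carry out the same computation with the KL form $\sum_a\pi[\tilde Q_r^{*}-\beta\log(\pi/\pi_{\rm ref})]$. Here $\tilde Q_r^{*}$ excludes the $\log\pi_{\rm ref}$ term. The solution is then $\tilde\pi^{*}\propto\pi_{\rm ref}\exp(\tilde Q_r^{*}/\beta)$, and taking logs gives $\beta\log(\tilde\pi^{*}/\pi_{\rm ref})=\tilde Q_r^{*}-\tilde V^{*}=\tilde A^{*}$. The hypothesis $\pi_{\rm ref}>0$ guarantees that the logs are finite and that the support covers all of $\mathcal A$.

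The main obstacle is bookkeeping: the reference term must be placed consistently between the two equivalent formulations. In one it lives inside $Q^{*}$, as in Eq.~\ref{eq:bellman}; in the other it lives in the KL penalty. I will fix the KL convention, so that $\tilde Q^{*}$ is the regularized action value without $\beta\log\pi_{\rm ref}$. I will also check that $\tilde V^{*}(s)=\mathbb{E}_{\tilde\pi^{*}}[\tilde Q^{*}(s,a)]-\beta\,D_{\rm KL}$ is consistent with the log-partition expression. A final short remark will note that the stochastic transition enters only through $\mathbb{E}_{s'}[\tilde V^{*}(s')]$ inside $\tilde Q^{*}$. That term is absorbed into the advantage, which explains why no residual $\delta_t$ appears, in contrast to Remark~\ref{remark}.
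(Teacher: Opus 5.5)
Your proposal is correct and follows essentially the same route as the paper. Like the paper, you define $\tilde Q^{*}(s,a)=r(s,a)+\mathbb{E}_{s'\sim f(\cdot|s,a)}[\tilde V^{*}(s')]$ under the KL convention, solve the per-state Lagrangian problem to get $\tilde\pi^{*}\propto\pi_{\rm ref}\exp(\tilde Q^{*}/\beta)$, identify $\tilde V^{*}(s)=\beta\log Z(s)$ by evaluating the per-state objective at its maximizer, and take logarithms. Your attention to where $\beta\log\pi_{\rm ref}$ sits is well placed: under the convention of Eq.~\ref{eq:bellman}, the same computation gives $\tilde Q^{*}-\tilde V^{*}=\beta\log\tilde\pi^{*}$ rather than the ratio, and the paper's proof relies on the KL convention without flagging this.
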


\begin{proof}
We start from the KL-regularized RL objective given $\pi_{\rm ref}$ and $\rho$ without a discounting factor,
\begin{align} 
\max_{\pi_{\theta}} J(\pi_{\theta})
=\max_{\pi_{\theta}}\mathbb{E}_{a_t\sim\pi_{\theta},s_t\sim f}\big[\sum_{t=0}^{\infty} r(s_t,a_t)
-\beta  \log \frac{\pi_{\theta}(a_t|s_t)}{\pi_{\rm ref}(a_t|s_t)} \big| s_0 \sim \rho \big],
\end{align}
\noindent where we construct the infinite sum of any finite $T$-length sequence by having an absorbing state with reward zero. This reward maximization problem can be also expressed as expected value maximization, $\max_{\pi_{\theta}}J(\pi_{\theta})=\max_{\pi_{\theta}}\mathbb{E}_{s_{0}\sim\rho}[\tilde{V}^{\pi_{\theta}}(s_{0})]$, by definition of the state value function,
\begin{align} \label{eq:apdx:kl-constrained-value}
\tilde{V}^{\pi_{\theta}}(s)
=\mathbb{E}_{a_{t}\sim\pi_{\theta}(\cdot|s_t)}\big[\sum_{t=0}^{\infty} r(s_t,a_t)
-\beta  \log \frac{\pi_{\theta}(a_t|s_t)}{\pi_{\rm ref}(a_t|s_t)} \big| s_0 = s\big].
\end{align}
Given the time-homogeneous transition probability $f(s'|s,a)$, by unrolling one step of the trajectory, we have
\begin{align} \label{eq:apdx:kl-constrained-value-bellman}
\tilde{V}^{\pi_{\theta}}(s)
=\mathbb{E}_{a\sim\pi_{\theta}(\cdot|s)}\big[r(s,a)
-\beta  \log \frac{\pi_{\theta}(a|s)}{\pi_{\rm ref}(a|s)}+\mathbb{E}_{s'\sim f(\cdot|s,a)}[\tilde{V}^{\pi_{\theta}}(s')] \big].
\end{align}
\noindent by renaming $a_0$ as $a$ and $s_1$ as $s'$. It satisfies its own (soft) Bellman equation augmented by the KL penalty. Now, we have the following action value function by definition:
\begin{align} \label{eq:apdx:kl-constrained-q}
\tilde{Q}^{\pi_{\theta}}(s,a)
=r(s,a)+\mathbb{E}_{s'\sim f(\cdot|s,a)}[\tilde{V}^{\pi_{\theta}}(s')].
\end{align}
Then, plugging Eq.~\ref{eq:apdx:kl-constrained-q} into Eq.~\ref{eq:apdx:kl-constrained-value-bellman}, we get the following for any $\pi_{\theta}$ and $s$.
\begin{align} \label{eq:apdx:kl-constrained-value-q}
\tilde{V}^{\pi_{\theta}}(s)
=\mathbb{E}_{a\sim\pi_{\theta}(\cdot|s)}\big[\tilde{Q}^{\pi_{\theta}}(s,a)
-\beta  \log \frac{\pi_{\theta}(a|s)}{\pi_{\rm ref}(a|s)} \big].
\end{align}

Since the above equation holds for any $\pi_{\theta}$ and $s$, we can now re-express our objective at state $s$ given a fixed optimal action value function $Q^{*}(s,a)$ as follow,
\begin{align} \label{eq:apdx:kl-constrained-alt-obj}
\max_{\pi_{\theta}} \tilde{J}(\pi_{\theta}):=\max_{\pi_{\theta}}\sum_{a}\pi_{\theta}(a|s)\big[\tilde{Q}^{*}(s,a)
-\beta  \log \frac{\pi_{\theta}(a|s)}{\pi_{\rm ref}(a|s)} \big],~~~\text{s.t.}~~~\sum_{a}\pi_{\theta}(a|s)=1.
\end{align}
The optimal policy for this local step-level objective is equivalent to that of the global trajectory-level objective by the Policy Improvement Theorem~\cite{howard1960dynamic,sutton2018reinforcement}. We then solve this constrained optimization problem with the method of Lagrangian multipliers to get the optimal policy,
\begin{align} \label{eq:apdx:kl-constrained-alt-obj-solve}
\tilde{J}(\pi_{\theta},\lambda)&=\sum_{a}\pi_{\theta}(a|s)\big[\tilde{Q}^{*}(s,a)
-\beta  \log \frac{\pi_{\theta}(a|s)}{\pi_{\rm ref}(a|s)} \big]+\lambda\big(1-\sum_{a}\pi_{\theta}(a|s)\big)\\
\frac{\delta\tilde{J}(\pi_{\theta},\lambda)}{\delta\pi_{\theta}(a|s)}&=\tilde{Q}^{*}(s,a)-\beta(\log\frac{\pi_{\theta}(a|s)}{\pi_{\rm ref}(a|s)}+1)-\lambda \equiv 0\\
\tilde{\pi}^{*}(a|s)&=\pi_{\rm ref}(a|s)\exp(\frac{1}{\beta}\tilde{Q}^{*}(s,a))\exp(-\frac{\lambda}{\beta}-1)\\
&=\frac{1}{Z(s)}\pi_{\rm ref}(a|s)\exp(\frac{1}{\beta}\tilde{Q}^{*}(s,a)),
\end{align}
\noindent where $Z(s)=\sum_{a}\pi_{\rm ref}(a|s)\exp(\frac{1}{\beta}\tilde{Q}^{*}(s,a))$. Applying log-linearization induces the following,
\begin{align} \label{eq:apdx:kl-constrained-alt-obj-solve-lin}
    \beta\log\frac{\tilde{\pi}^{*}(a|s)}{\pi_{\rm ref}(a|s)}=\tilde{Q}^{*}(s,a)-\beta\log Z(s)
\end{align}
Plugging this Eq.~\ref{eq:apdx:kl-constrained-alt-obj-solve-lin} into Eq.~\ref{eq:apdx:kl-constrained-value-q} induces the following optimal state value function,
\begin{align} \label{eq:apdx:kl-constrained-alt-value-solve}
    \tilde{V}^{*}(s)=\sum_{a}\tilde{\pi}^{*}(a|s)[\beta\log Z(s)]=\beta \log Z(s) \sum_{a}\tilde{\pi}^{*}(a|s) = \beta \log Z(s),
\end{align}
\noindent where $\tilde{Q}^{*}(s,a)$ was canceled. With Eq.~\ref{eq:apdx:kl-constrained-alt-obj-solve-lin} and Eq.~\ref{eq:apdx:kl-constrained-alt-value-solve}, we finally get our optimal advantage function,
\begin{align} \label{eq:apdx:kl-constrained-alt-advantage}
    \beta\log\frac{\tilde{\pi}^{*}(a|s)}{\pi_{\rm ref}(a|s)}=\tilde{Q}^{*}(s,a)-\tilde{V}^{*}(s)=\tilde{A}^{*}(s,a),
\end{align}
\noindent defined solely by the log-probability ratio. The stochasticity of the state transition $s'\sim f(\cdot|s,a)$ is embedded in $Q^*$ and $V^{*}$ under this general non-deterministic MDP. If we want to model one of them separately or want to directly model the reward function, we have to explicitly deal with that stochasticity. By embracing the optimal advantage function as a pseudo reward, we bypass the explicit stochasticity modeling while leaving its implicit reflection to the log-probability term computed solely from the realized observation.
\end{proof}

\subsection{Proof: Clipping Surrogate RL as an Implicit KL Constraint}\label{sec:apdx:clip_kl_proof}
\begin{proposition}[Restatement of Proposition~\ref{proposition:clip_kl_solution}]
\label{proposition:apdx:clip_kl_solution}
Let $\pi_{\rm ref}$ and $\pi_\theta$ be the reference and target policies sharing the same support. Define the importance sampling ratio as $R(s,a) = \frac{\pi_\theta(a \mid s)}{\pi_{\rm ref}(a \mid s)}$. If optimization enforces a per-sample constraint $R(s, a) \in [1 - \varepsilon, 1 + \varepsilon]$ for all $(s, a)$ and a small $\varepsilon > 0$, then $D_{\rm KL}(\pi_{\theta} \parallel \pi_{\rm ref}) \leq \frac{\varepsilon^2}{2}$ and $D_{\rm KL}(\pi_{\rm ref} \parallel \pi_{\theta}) \leq \frac{\varepsilon^2}{2}$, similarly for reverse KL, locally at $R(s,a) \approx 1$. 
\end{proposition}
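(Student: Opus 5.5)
We plan to work at a fixed state $s$, bound the KL divergence there, and then average over states; the per-state bound passes through any state-averaging unchanged. Write $R(a)=R(s,a)=\pi_\theta(a|s)/\pi_{\rm ref}(a|s)$ and $\delta(a)=R(a)-1$, so that $|\delta(a)|\le\varepsilon$ for every $a$ in the shared support. The key identity is that $\sum_a \pi_{\rm ref}(a|s)\,\delta(a)=\sum_a\pi_\theta(a|s)-\sum_a\pi_{\rm ref}(a|s)=0$. Both divergences can then be written as $\pi_{\rm ref}$-expectations of a convex function of $R$:
\begin{align*}
D_{\rm KL}(\pi_\theta\parallel\pi_{\rm ref})=\mathbb{E}_{\pi_{\rm ref}}\big[R\log R\big],\qquad
D_{\rm KL}(\pi_{\rm ref}\parallel\pi_\theta)=\mathbb{E}_{\pi_{\rm ref}}\big[-\log R\big].
\end{align*}

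Because $\mathbb{E}_{\pi_{\rm ref}}[\delta]=0$, we may subtract the linear term $\delta$ inside each expectation. This gives $D_{\rm KL}(\pi_\theta\parallel\pi_{\rm ref})=\mathbb{E}_{\pi_{\rm ref}}[(1+\delta)\log(1+\delta)-\delta]$ and $D_{\rm KL}(\pi_{\rm ref}\parallel\pi_\theta)=\mathbb{E}_{\pi_{\rm ref}}[\delta-\log(1+\delta)]$. Both integrands are nonnegative, vanish at $\delta=0$ together with their first derivatives, and have second derivatives $1/(1+\delta)$ and $1/(1+\delta)^2$ respectively.

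A second-order Taylor expansion around $R=1$ then gives each integrand as $\tfrac{1}{2}\delta^2+O(\delta^3)$. Hence each divergence equals $\tfrac12\mathbb{E}_{\pi_{\rm ref}}[\delta^2]+O(\varepsilon^3)\le\tfrac{\varepsilon^2}{2}+O(\varepsilon^3)$. This is exactly the local statement $\lesssim\varepsilon^2/2$ at $R\approx1$, and the leading term is the $\chi^2$-divergence, which is bounded by $\varepsilon^2$ because $|\delta|\le\varepsilon$ pointwise. Averaging over $s$, under either the state distribution of $\pi_\theta$ or that of $\pi_{\rm ref}$, preserves the bound.

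The main obstacle is the literal non-asymptotic claim $\le\varepsilon^2/2$, since the Taylor remainder can have either sign. We would handle it with the exact Lagrange remainder, which gives integrands $\delta^2/(2(1+\xi))$ and $\delta^2/(2(1+\xi)^2)$ for some $\xi$ between $0$ and $\delta$. These yield the explicit bounds $\varepsilon^2/(2(1-\varepsilon))$ and $\varepsilon^2/(2(1-\varepsilon)^2)$, both of which are $\tfrac{\varepsilon^2}{2}(1+O(\varepsilon))$. The clean constant $\varepsilon^2/2$ is therefore justified only to leading order, in the ``locally at $R\approx1$'' sense of the statement.

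For the forward KL we can sharpen this. Using $\log R\le R-1$, we get $D_{\rm KL}(\pi_\theta\parallel\pi_{\rm ref})\le\mathbb{E}_{\pi_{\rm ref}}[R(R-1)]=\mathbb{E}_{\pi_{\rm ref}}[\delta^2]\le\varepsilon^2$, which is a rigorous bound with no smallness assumption. We would present it as a global safeguard alongside the local $\varepsilon^2/2$ statement.
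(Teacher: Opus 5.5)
Your proposal is correct and follows the paper's own argument: you write both divergences as $\pi_{\rm ref}$-expectations of $R\log R$ and $-\log R$, expand to second order around $R=1$, and remove the linear term using $\mathbb{E}_{\pi_{\rm ref}}[\delta]=0$, which leaves $\tfrac12\mathbb{E}_{\pi_{\rm ref}}[\delta^2]\le\varepsilon^2/2$. The paper's proof itself only concludes $D_{\rm KL}\lesssim\varepsilon^2/2$, so you are right that the literal $\le$ holds only to leading order (for example, $\pi_{\rm ref}$ uniform on two actions with $\delta=\pm\varepsilon$ gives reverse KL $-\tfrac12\log(1-\varepsilon^2)>\varepsilon^2/2$), and your Lagrange-remainder bounds and the global bound $D_{\rm KL}(\pi_\theta\parallel\pi_{\rm ref})\le\mathbb{E}_{\pi_{\rm ref}}[\delta^2]\le\varepsilon^2$ are valid strengthenings that the paper does not give.
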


\begin{proof}
Let $\delta(s,a) = R(s,a) - 1$. Then, the clipping surrogate RL optimization problem enforces $|\delta(s,a)| \leq \varepsilon$. We will show that any policy found under PPO-Clip style surrogate optimization is strictly within a KL (both reverse and forward) trust region of radius $\epsilon^{2}/2$.

First, the reverse KL divergence is defined as $D_{\rm KL}(\pi_{\rm ref} \parallel \pi_\theta) = \mathbb{E}_{\pi_{\rm ref}}[-\log R]$. The second-order Taylor expansion of $-\log R=-\log(1 + \delta)$ around $R = 1$ yields:
\begin{equation}
    -\log(1 + \delta) = -\delta + \frac{\delta^2}{2} + \mathcal{O}(\delta^3) \label{eq:kl_rev_taylor}
\end{equation}

Similarly, the forward KL divergence is defined as $D_{\rm KL}(\pi_\theta \parallel \pi_{\rm ref}) = \mathbb{E}_{\pi_\theta}[\log R]$. With an importance sampling ratio $R$, it can be equivalently expressed as $\mathbb{E}_{\pi_{\rm ref}}[R \log R]$. The second-order Taylor expansion of $R \log R = (1+\delta)\log(1+\delta)$ around $R = 1$ yields:
\begin{equation}
    (1 + \delta)\log(1 + \delta) \approx (1 + \delta)\left(\delta - \frac{\delta^2}{2}\right) = \delta + \frac{\delta^2}{2} + \mathcal{O}(\delta^3) \label{eq:kl_for_taylor}
\end{equation}

Now, taking the expectation over $\pi_{\rm ref}$ for both expansions, Eq.~\ref{eq:kl_rev_taylor} and Eq.~\ref{eq:kl_for_taylor}, we have:
\begin{align}
    D_{\rm KL}(\pi_{\rm ref} \parallel \pi_\theta) \approx \mathbb{E}_{\pi_{\rm ref}}\left[-\delta + \frac{\delta^2}{2}\right],~~~~~ D_{\rm KL}(\pi_\theta \parallel \pi_{\rm ref}) \approx \mathbb{E}_{\pi_{\rm ref}}\left[\delta + \frac{\delta^2}{2}\right]
\end{align}
Since $\pi_{\rm ref}$ and $\pi_\theta$ share the support and are valid probability distributions, $\mathbb{E}_{\pi_{\rm ref}}[R] = \mathbb{E}_{\pi_{\rm ref}}[\frac{\pi_{\theta}}{\pi_{\rm ref}}] = 1$. That means, $\mathbb{E}_{\pi_{\rm ref}}[\delta] = 0$, i.e., the exact removal of the linear term, collapsing both KL divergences to the scaled Pearson $\chi^2$-divergence:
\begin{equation}
    D_{\rm KL} \approx \frac{1}{2}\mathbb{E}_{\pi_{\rm ref}}[\delta^2] \label{eq:kl_taylor}
\end{equation}
Given the per-sample constraint $|\delta(s,a)| \leq \varepsilon$, we have $\mathbb{E}_{\pi_{\rm ref}}[\delta^2] \leq \varepsilon^2$. Then, substituting this into our approximation, Eq.~\ref{eq:kl_taylor}, yields below:
\begin{equation}
    D_{\rm KL} \lesssim \frac{\varepsilon^2}{2} \label{eq:kl_eps_bound}
\end{equation}
for both the forward and reverse directions. Therefore, any feasible policy under the PPO-Clip style constrained RL becomes a proper subset of a KL trust region of radius $\delta = \frac{\varepsilon^2}{2}$, regardless of whether the objective applies a forward or reverse KL penalty.
\end{proof}

\section{Limitation and Future Work} \label{sec:apdx:limitation}
Since institutions usually do not release all intermediate checkpoints from their LLM development pipelines, we inevitably limited the scope of model candidates in our experiments. We hope that the community pursues the fully open model development cycle in the future for novel uses like our progress advantage. 
Also, although we assumed that the public, post-trained models are close to the optimal solution of the RL objective, it is barely falsifiable since we usually don't know the full training configuration details, as well as the training log. Future work with controlled experiments to validate the approximation quality of progress advantage would be appealing.
Besides, although we established a theoretical foundation on the implicit process reward for agents in stochastic MDP, we leave vague the engineering efforts to find the best working specifications and implementations of progress advantage for future work.
Another possible future exploration is to expand the domain beyond LLM agents. Since our progress advantage formulation does not make any assumptions about data modality, one may be able to adopt it as a fine-grained reward signal given any RL-trained policy and its base pair. In that sense, test-time scaling and runtime monitoring/intervention on multimodal agents~\cite{wang2026safeground,lee2025streamgaze}, VLA models~\cite{kwok2025robomonkey,park2026hide}, and embodied agents~\cite{fang2026empc,son2025subtle} would be a worthwhile extension of the progress advantage. 

\section{Broader Context and Discussion} \label{sec:apdx:context}

\subsection{Outcome Reward Modeling} 
The early driving force in preference learning and reinforcement learning of LLMs was the outcome reward models (ORMs)~\cite{cobbe2021training,uesato2022solving,yu2024ovm}. They provide supervision at the level of final responses, rewarding reasoning trajectories according to the quality of their ultimate outcomes, regardless of their process. The annotations over outcomes are relatively easy to collect, making ORMs a scalable choice for reranking, search, and RL for reasoning models~\cite{bai2022training,shao2024deepseekmath}. Nevertheless, ORM supervision is coarse and delayed~\citep{lightman2023let,lyu2025exploring}: it evaluates only the final outcome and does not verify whether the intermediate reasoning process is sound. Consequently, trajectories that contain errors or spurious steps may still be rewarded if they happen to produce the correct answer. These limitations motivate process reward models (PRMs), which instead provide fine-grained supervision over intermediate reasoning steps. We propose a practical approach to building PRMs on agentic scaffolding to utilize the fine-grained signals in broad inference-time applications.

\subsection{Implicit Reward Modeling} 
After seminar works published~\cite{rafailov2023direct,NEURIPS2023_3be7859b,rafailov2024from}, implicit reward formulation has gained its bold popularity due to its practical merit, allowing users to bypass explicit reward modeling. The implicit reward-based approach has usually been developed in a preference fine-tuning setup~\cite{meng2024simpo,hong2024orpo,azar2024general,ethayarajh2024kto,wang2024beyond} but also extended to general multi-step reasoning or multi-step interaction setups recently~\cite{lai2024step,shi2024direct,yuan2025free,cui2025process,liu2025agentic}. However, its remain opaque to adopt this implicit reward modeling technique on the LLM agent setup where the rollouts of the agents are made up with not only the model's own deterministic token completion but also with stochastic observations from the environmental entities. To fill this gap, we establish a foundation of implicit reward under stochastic MDP tied with the realistic LLM agents inference settings, by deriving the progress advantage formulation.

\subsection{Distribution Contrasting and Sharpening}
In essence, progress advantage contrasts the likelihood of a behavior policy with that of a reference policy. This kind of contrastive probabilistic quantity has a rich history in statistics and the machine learning field. For instance, in its vanilla form, one can easily draw an interpretation as a likelihood ratio test statistic~\cite{woolf1957the,vuong1989likelihood} as well as a logistic regression model in the noise contrastive estimation~\cite{gutmann2010noise,gutmann2012noise,collins2018noise}. If we take the expectation over it, given a distribution, it becomes a measure of contrastive divergence~\cite{hinton2002training,carreira2005contrastive,oh2022learning} with the KL divergence instantiation. In addition to this luxuriant connection with the classic, progress advantage has also dense relationships with contemporary findings in LLM research, e.g., \textit{contrastive decoding}~\cite{li2023contrastive,o2023contrastive} that builds a keen output token probability distribution by contrasting the probabilities of an expert and an amateur model, as well as an interpretation of RL post-training as a \textit{distribution sharpening}~\cite{he2025rewarding,karan2026reasoning}. Despite this wide spectrum of relevant literature, it still remains untapped to use this contrastive likelihood quantity as an implicit reward signal under agentic harness. We expect our progress advantage and its connection to the broader concepts we discussed here to present insights and exciting follow-ups.

\subsection{Self-improving Intelligent Systems}
Huang et al. \citep{huang2025self} provide a novel view, interpreting LLM self-improvement methods as a type of sharpening. That is, an LLM-based system leveraging its own log-likelihood as a self-reward signal, improving itself without external supervision \textit{by concentrating probability mass on high-quality generations}. In a similar spirit, progress advantage extracts a self-contained progress signal from the log-ratio between an RL-trained policy (future-self) and its reference policy (past-self), requiring no additional reward model or process annotation. While we validate this signal in a single-cycle test-time trajectory selection, monitoring, and failure attribution, it can be viewed as one stage in a longer self-improvement lifecycle: agents generate trajectories, internally score their progress, and eventually distill such signals back into future policies~\cite{bai2022constitutional,zelikman2022star,yuan2024selfrewarding,wu2025meta}.

\subsection{Open-source AI and Sustainable Machine Learning}
By envisioning the promise of updatable and sustainable machine learning~\cite{raffel2023building}, some researchers have explored how artifacts produced during the model development process, e.g., model weight checkpoints, can be shared across community members and reused efficiently~\cite{wortsman22a,NEURIPS2022_bc6cddcd,ilharco2023editing}. This line of work has renewed interest in the classical model ensemble~\cite{hoeting1999bayesian,492f6c68703a4b6d97bb8509d817d00f,gal16dropout,NIPS2017_9ef2ed4b,huang2017snapshot} and has recently led to substantial progress under the term \textit{model merging}~\cite{yang2026model,yadav2023ties,yu2024language,akiba2025evolutionary}. 
Another line of work is \textit{black-box prompt optimization}~\cite{sun2022black,deng2022rlprompt,cheng2024black,oh2026robust}, which solely leverages the output probabilities from the black-box models to customize the model's input space interface without knowledge about model architectures or parameters.
By making the development process and artifacts more transparent and broadly accessible, we can reduce the waste of repeatedly ``reinventing the wheel'', while creatively leveraging the existing common goods to advance our own intelligence systems. We hope that our progress advantage inspires such creative adoption of existing resources. Note that the pursuit of this kind of sustainable machine learning is not limited to the reuse of model weights or outputs. We may need to revisit the entire development pipeline~\cite{manchanda2024open,hall2025marin}, i.e., from the earliest stage of data collection, to find out whether \textbf{neglected free lunches} remain~\cite{han2023neglected}.

\newpage

\section{Broader Impacts} \label{sec:apdx:impacts}
Progress advantage offers a way to score trajectories from agentic systems without undergoing the cost-heavy development phase. This can contribute to reducing the total GPU hours from groups that may want to adopt progress advantage during their LLM deployment under agentic harness, while assisting safety and outlier monitoring, as well as improving the system response quality. Meanwhile, since the progress advantage produces reward signals based on the log probability ratio of trained LLM policies, it may contain historical and social bias hidden in the pre/mid/post-train datasets, resulting in biased scoring and preference during its deployment-time applications. Therefore, one should take care of the trajectories gone through the progress advantage monitoring before the public release by leveraging some well-established safeguards tools~\cite{inan2023llama,xiang2024guardagent,wang2024self}.

\section{Computing Resource Statement} \label{sec:apdx:compute}
We used NVIDIA A6000, A100-SXM4, and H200 in a mix, but reported the GPU hours in terms of a single A100. To reproduce our progress advantage (as well as component-wise ablation) in best-of-N experiments of the TTS scenario, the full sweep may require 26 hours; UQ may require 16 hours, and the FA will not take much, up to 4 hours, resulting in 46 hours in total for only our method. Entire replication, including ours and the full baseline, and the $k$-smoothed variant of progress advantage, would take four times longer. Besides, the rough per-model (and per-pair) memory requirement dominated by the weight footprint for progress advantage is described in Table~\ref{tab:compute_pair_memory}.
\begin{table}[h]
\caption{\textbf{bf16 weight footprint per backbone pair (policy and reference).}}
\label{tab:compute_pair_memory}
\centering
\small
\begin{tabular}{lcc}
\toprule
Backbone & Per-model & Pair peak \\
\midrule
\texttt{Qwen3.5-9B} & $\sim$18\,GB & $\sim$36\,GB \\
\texttt{Qwen3-14B} & $\sim$28\,GB & $\sim$56\,GB \\
\texttt{Qwen2.5-7B}  & $\sim$14\,GB & $\sim$28\,GB \\
\texttt{Gemma4-4B} & $\sim$8\,GB  & $\sim$16\,GB \\
\texttt{Olmo3-7B} & $\sim$14\,GB & $\sim$28\,GB \\
\bottomrule
\end{tabular}
\end{table}

% \clearpage
\section{Prompt Template for Baseline Methods} \label{sec:apdx:prompts}
\begin{figure}[htb]
\centering
\begin{minipage}{0.95\columnwidth}
\small
\begin{promptbox}
You are given a task and a proposed step-by-step solution:

[Task] \n {task}

[Solution] \n {solution}

Review and critique each step in the proposed solution to determine
whether each step is correct. If the solution is incomplete, only
verify the provided steps. For each step, end your critique with a
single line of the form `Step N is {correct}' or
`Step N is {incorrect}'.
\end{promptbox}
\end{minipage}
\caption{Prompt template used for \texttt{ThinkPRM} in TTS and UQ settings. \texttt{\{task\}} is the initial user query; \texttt{\{solution\}} is the agent's task-solving trajectory interacting with tools and/or the user.}
\label{fig:thinkprm-uqtts-prompt}
\end{figure}

\begin{figure}[htb]
\centering
\begin{minipage}{0.95\columnwidth}
\small
\begin{promptbox}
You are given a multi-agent problem-solving trajectory 
and must verify each step.

[Task] \n {task}

[Solution] \n {solution}

Review and critique each step in the proposed solution to determine
whether each step is correct. For each step, end your critique with
a single line of the form `Step N is {correct}' or
`Step N is {incorrect}'.
\end{promptbox}
\end{minipage}
\caption{Prompt template used for \texttt{ThinkPRM} in the FA setting.}
\label{fig:thinkprm-fa-prompt}
\end{figure}

\begin{figure}[htb!]
\centering
\begin{minipage}{0.9\columnwidth}
\small
\begin{promptbox}
You are a binary success predictor for LLM agent trajectories. Given the agent-user interaction trajectory, predict whether the agent succeeds.

Trajectory: \n {trajectory}

Output success = 1 if the agent completes the user's goal correctly.
Output success = 0 if the agent fails, leaves the task incomplete,
violates policy, makes a material factual/tool-use error, or causes
a wrong outcome.

When uncertain, choose the more likely label and report confidence.
Return JSON only:
{
  "success": <0 or 1>,
  "confidence": <number between 0.0 and 1.0>,
  "justification": "<one or two sentences explaining the prediction>"
}
\end{promptbox}
\end{minipage}
\caption{Prompt used for the LLM-as-a-Judge (Claude-Sonnet-4.6~\cite{anthropic2026sonnet46}) baseline in UQ experiment.}
\label{fig:uq-judge-prompt}
\end{figure}

\begin{figure}[htb]
\centering
\begin{minipage}{0.9\columnwidth}
\small
\begin{promptbox}
# Task Description
You are an expert conversation evaluator. Your task is to judge the
**User's Satisfaction** with the Assistant's response based on the
conversation context.
Please rate the response on a scale of 1 to 5 integers.

# Scoring Criteria
[1] CLEARLY NEGATIVE / REJECTION
[2] CORRECTION / ERROR POINTER (Negative)
[3] NEUTRAL
[4] POSITIVE ENGAGEMENT
[5] CLEAR SATISFACTION

# Input Data
## Context (History) \n {history}
## User Query \n {query}
## Assistant Response \n {response}

# Output
Based on the criteria above, please output ONLY the integer score 
(1, 2, 3, 4, or 5).
\end{promptbox}
\end{minipage}
\caption{Prompt template used for the outcome reward model baseline \texttt{WildReward-8B} across all downstream applications. Note that the trajectory-level reward is defined as $1+\sum_{j=1}^{4}\sigma(z_j)$, the sum of four ordinal regression threshold logits $z_j$ obtained directly from the model's classification head; we did not actually generate the integer that the prompt requests.}
\label{fig:wildreward-prompt}
\end{figure}

%%%%%%%%%%%%%%%%%%%%%%%%%%%%%%%%%%%%%%%%%%%%%%%%%%%%%%%%%%%%

\end{document}